\documentclass[11pt]{article}

\usepackage[margin=1in]{geometry}
\usepackage{microtype}
\usepackage{setspace}

\usepackage[style=numeric,backend=biber,natbib=true]{biblatex}
\DeclareBibliographyAlias{preprint}{unpublished}
\usepackage{graphicx}
\usepackage[section]{placeins}
\usepackage{float}

\usepackage{enumitem}
\usepackage{array}
\usepackage{adjustbox}
\usepackage{booktabs}
\usepackage{algorithm}
\usepackage{algorithmicx}
\usepackage{algpseudocode} 
\usepackage{subcaption}
\usepackage{amsmath}
\usepackage{amssymb}
\usepackage{amsfonts}
\usepackage{mathtools}
\usepackage{dsfont}
\usepackage{amsthm}
\usepackage{thmtools}
\usepackage{makecell}
\usepackage{tabularx}
\usepackage{ragged2e}
\newcommand{\keywords}[1]{\par\noindent\textbf{Keywords:} #1\par}

\newcommand{\uriLogo}{%
  \IfFileExists{logos/RI.png}{\includegraphics[height=3.0ex]{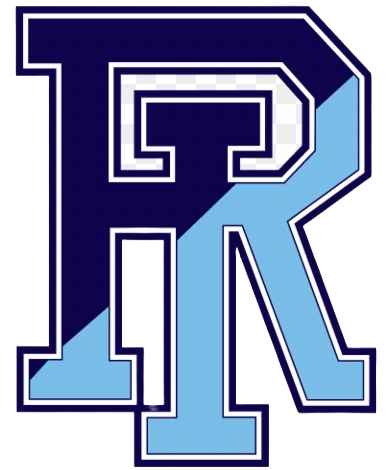}}{%
    \IfFileExists{logos/RI.png}{\includegraphics[height=3.0ex]{logos/RI.png}}{1}%
  }%
}
\newcommand{\hmsLogo}{%
  \IfFileExists{logos/HARVARD.png}{\includegraphics[height=3.5ex]{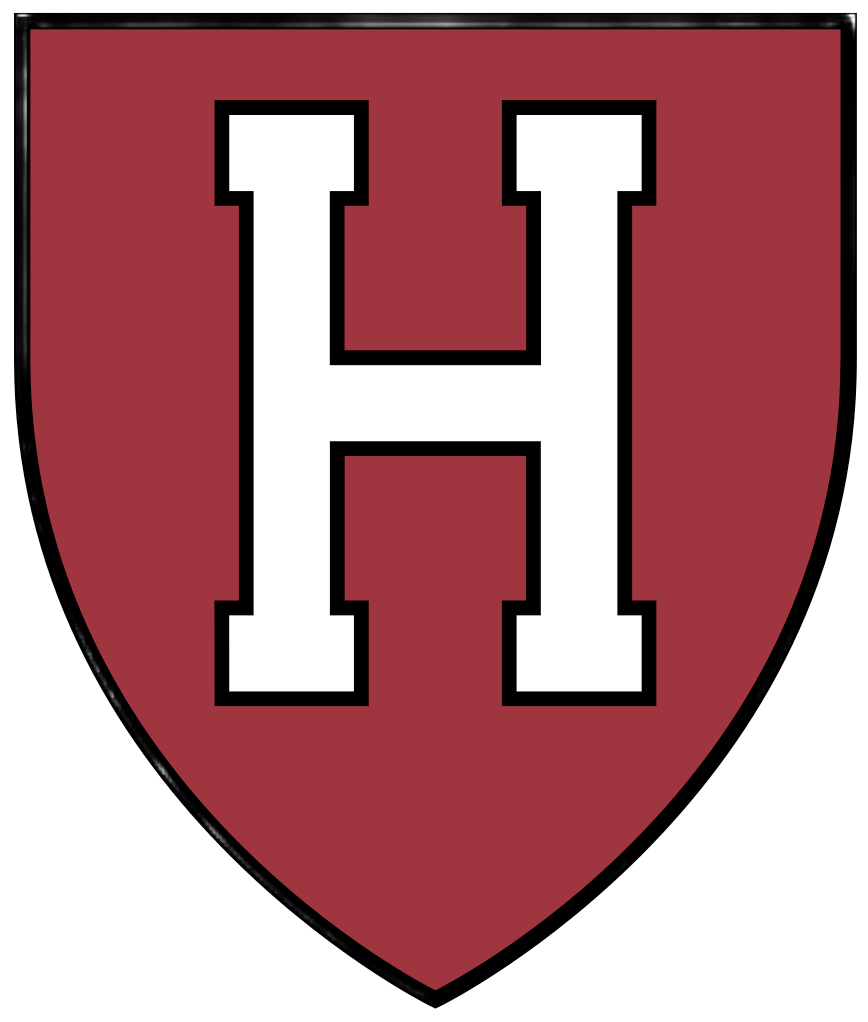}}{%
    \IfFileExists{logos/HARVARD.png}{\includegraphics[height=3.5ex]{logos/HARVARD.png}}{2}%
  }%
}

\newtheorem{theorem}{Theorem}

\newenvironment{proofsketch}{\noindent\textit{Proof sketch.}\ }{\hfill$\square$}

\renewcommand{\abstractname}{}

\DeclareMathOperator*{\argmax}{arg\,max}

\AtBeginDocument{%
  }
\begin{document}

\title{End-to-end Optimization of Belief and Policy Learning in Shared Autonomy Paradigms}

\author{
MH Farhadi, Ali Rabiee, Sima Ghafoori,\\
Anna Cetera, Andrew Fisher,
Reza Abiri \\
University of Rhode Island, Kingston, Rhode Island, USA
}
\date{}

\maketitle

\begin{abstract}
  Shared autonomy systems require principled methods for inferring user intent and determining appropriate assistance levels. This is a central challenge in human-robot interaction, where systems must be successful, while being mindful of user agency. Previous approaches relied on static blending ratios or separated goal inference from assistance arbitration, leading to suboptimal performance in unstructured environments. We introduce BRACE (Bayesian Reinforcement Assistance with Context Encoding), a novel framework that fine-tunes Bayesian intent inference and context-adaptive assistance through an architecture enabling end-to-end gradient flow between intent inference and assistance arbitration. Our pipeline processes the full Bayesian goal distribution rather than MAP estimates, conditioning collaborative control policies on environmental context and complete goal probability distributions. We provide analysis showing (1) optimal assistance levels should decrease with goal uncertainty and increase with environmental constraint severity, and (2) integrating belief information into policy learning yields a quadratic expected regret advantage over sequential approaches. We validated our algorithm against SOTA methods (IDA, DQN) using a three-part evaluation progressively isolating distinct challenges of end-effector control: (1) core human-interaction dynamics in a 2D human-in-the-loop cursor task, (2) non-linear physical dynamics of a robotic arm, and (3) integrated manipulation under goal ambiguity and environmental constraints. We demonstrate improvements over SOTA, achieving 6.3\% higher success rates and 41\% increased path efficiency, and 36.3\% success rate and 87\% path efficiency improvement over unassisted control. Results confirmed the advantage of integrated optimization is most pronounced in complex scenarios with goal ambiguity. Our method outperformed sequential approaches by 23\% in completion time in scenarios with high goal uncertainty, with success rate improvements of 13.1\% in complex multi-target environments. The demonstrated benefits of our integrated approach are generalizable across robotic domains requiring goal-directed assistance, advancing the SOTA for adaptive shared autonomy.
\end{abstract}

\keywords{Shared Autonomy, Human-Robot Interaction (HRI), Assistive Robotics, Collaborative Control, User Agency, Goal Uncertainty, Intent Inference, Belief-Conditioned Policy}
\section{Introduction}

Shared autonomy is a control paradigm in which a human and an automated system jointly operate a device, with the system adapting its assistance to the situation. For example, In assistive robotics for users with motor impairments, the system must infer whether the user intends to grasp a large water glass or a small pill bottle. It may provide minimal assistance during the reaching phase to preserve user agency, then increase support when aligning the gripper for a precision-critical grasp on the small bottle.

The core challenge in shared autonomy often comes from the inherent tension between two critical processes: inferring the user's goal (often a probabilistic inference problem) and determining the appropriate level of assistance (an optimization problem) \cite{jeon2020}. Prior approaches typically addressed this challenge as separate or sequential problems, either using fixed blending ratios regardless of context \cite{dragan2013policy}, separating goal inference from assistance arbitration \cite{javdani}, or solving them with model-based planning that requires known dynamics models \cite{jarrett}. Recent belief-conditioned methods such as SARI \cite{jonnavittula2022}, mutual-adaptation \cite{nikolaidis}, and IDA \cite{IDA} highlighted the need for an end-to-end integration, however, their findings either relied on Maximum a-posteriori (MAP) goals, binary interventions or failed to adapt optimally to the uncertainty in the user's goals.

Classical methods usually first estimate the user’s intent, then control the system using either a single best guess of that intent or a belief-space planner whose inference is fixed and does not adapt. BRACE instead learns assistance as $\gamma = f_\theta(s, b, c)$ using the goal belief $b$, and it links inference and control end-to-end so that the resulting performance (the downstream gradients) also shapes and updates the belief itself.

This work makes the following contributions:

\begin{enumerate}[leftmargin=*]
\item  
      \textbf{Full-belief conditioning:} Assistance is a function of the entire intent posterior, not the MAP goal, allowing a more nuanced reaction to user uncertainty. 

\item   
      \textbf{End-to-end coupling:} The intent inference module and assistance arbitration policy are optimized jointly from a single control objective; gradients from task returns reshape the belief to be decision-useful, avoiding the estimator–controller mismatch of sequential pipelines. 

\item  
      \textbf{Proof of advantage over regret bounds:} When different goals require different assistance (because intent is unclear and the environment is constrained), conditioning on belief and jointly optimizing inference and control reduces expected regret quadratically compared with MAP schemes. The benefit grows as constraints tighten and shrinks as belief entropy increases. That’s exactly when BRACE helps most.
\end{enumerate}
  
Recent belief-aware approaches such as SARI \cite{jonnavittula2022}, Mutual-Adaptation \cite{nikolaidis}, IDA \cite{IDA}, and STREAMS \cite{streams} have begun to couple intent inference with assistance. Our work differs from them in three main ways. \textbf{(i)} BRACE leverages the complete belief distribution $b\in\mathbb{R}^{|G|}$ rather than a single goal estimate or binary signal, which is critical in scenarios with high goal uncertainty. \textbf{(ii)} Their assistance logic is either huristic thresholds or secondary networks trained with the belief module frozen; BRACE performs joint gradient-based optimization of inference and control, giving provably lower regret (Theorem 2). \textbf{(iii)} None of the prior systems provide task-agnostic monotonicity guarantees or regret bounds.

The paper is organized as follows: Section 2 reviews prior work on intent inference and assistance arbitration. Section 3 introduces the BRACE framework, detailing its methodology, theoretical properties, and system components. Section 4 presents empirical results from multiple end-effector based control experiment scenarios, simulation benchmarks, and ablation studies. Section 5 discusses the findings and concludes with limitations and future work. The Appendices provide supplementary materials, including mathematical proofs, reward function details, and additional analyses.

\section{Related Works}

\subsection{Planning-based Approaches}

Javdani et al. \cite{javdani} modeled the problem as a Partially Observable Markov Decision Process (POMDP) with the human's goal as a hidden state. Their hindsight optimization algorithm provided computational tractability by approximating the POMDP solution, enabling assistance even when goal confidence was low. Jain and Argall \cite{jain2019} further extended probabilistic approaches for assistive robotics through recursive Bayesian filtering that fused observations like human behavior with different rationality levels. While these approaches offer principled formulations, they require simplifying assumptions about goal structures and rely on discrete optimization rather than learning adaptive strategies from interaction data.

In the planning literature, Monte Carlo Tree Search (MCTS) methods have been effectively applied to scenarios involving uncertainty in user preferences. Aronson et al. \cite{aronson} leveraged MCTS's statistical sampling approach to handle complex decision spaces without requiring exhaustive search. Nikolaidis et al. \cite{nikolaidis} advanced this field by formalizing human-robot mutual adaptation through Bounded-Memory Models embedded in partially observable decision processes, demonstrating performance improvements over one-way adaptive approaches. Complementing these works, Panagopoulos et al. \cite{panagopoulos2021} developed a Bayesian framework for intent recognition in remote navigation that fuses multiple observation sources, further extending probabilistic approaches to human-robot collaboration.

\subsection{Intention Inference and Reward Shaping}

Recent advances in reinforcement learning have enabled new approaches to shared autonomy that learn assistance policies. Reddy et al. \cite{DQN} implemented deep Q-learning to learn end-to-end assistive policies without explicit goal models, implicitly inferring intent through reward signals during human-in-the-loop training. Xie et al. \cite{xie2022} extended this work by developing a probabilistic policy blending approach that combines deep reinforcement learning with explicit uncertainty handling, demonstrating how different arbitration functions can blend human and agent actions for variable assistance levels.

Knox et al. \cite{knox} explored the interplay between reinforcement learning and human feedback, developing TAMER (Training an Agent Manually via Evaluative Reinforcement), which provides a framework for shaping agent behavior through human evaluative feedback. While not directly addressing shared autonomy, their work highlights how human input can guide learning systems, informing approaches to human-in-the-loop assistance.

Schaff and Walter \cite{schaff} developed residual policy learning that applies minimal corrections to human inputs to satisfy environmental constraints. This approach ensures humans remain mostly in control, which maintains user's sense of agency, but lacks a principled way to scale assistance when users struggle significantly. Liu et al. \cite{liu2023} proposed a complementary approach that blends imitation and reinforcement learning for robust policy improvement, addressing some of these limitations.

Accurate prediction of human intent forms a critical component of effective shared autonomy systems. Recent approaches to shared autonomy have employed Bayesian inference frameworks for probabilistic goal prediction, which have shown effectiveness in modeling uncertainty over user intentions \cite{jain2018, jain2019}. However, many current systems, including multimodal approaches, still rely on mapping certainty to assistance through predefined rules or thresholds rather than through learned policies that optimize for overall performance across varying contexts and user capabilities \cite{IDA, jonnavittula2022}. This separation between intent inference and assistance determination often limits the system's adaptability to complex scenarios with high goal uncertainty \cite{DQN}. Similar integration challenges exist in other robotic domains, \cite{goshtasbi} recently demonstrated that integrating AI with tactile sensing can significantly improve predictive accuracy by jointly processing multimodal sensory inputs, rather than treating sensing and interpretation as separate stages.

\begin{table}[h]
\caption{Comparison of BRACE with notable prior works}
\begin{adjustbox}{max width=\textwidth, center}
\renewcommand{\arraystretch}{1.3}  
\begin{tabular}{>{\raggedright\arraybackslash}p{2.5cm}
               >{\raggedright\arraybackslash}p{2.6cm}
               >{\raggedright\arraybackslash}p{2.6cm}
               >{\raggedright\arraybackslash}p{2.3cm}
               >{\raggedright\arraybackslash}p{2.4cm}}
\toprule
Framework & Belief input to controller & Assistance policy & Training & Validation \\
\midrule
Reddy et al.\ (2018) & Implicit (end-to-end) & Deep Q-learning & Human-in-the-loop RL & Lunar Lander, Quadrotor perching  \\[1pt]
Javdani et al.\ (2015) & Full POMDP belief & Hindsight optimization & Model-based & PR2 feeding + grasp tasks \\[1pt]
McMahan et al. (2024) & Binary intervene flag & Diffusion residual & Off-policy RL & Reacher, Lunar Lander\\[1pt]
Oh et al. (2021) & Heuristic confidence & Policy switch & Supervised & reach-and-grasp \\[1pt]
BRACE (ours) & Full belief vector & Continuous $\gamma$ via actor-critic & Joint end-to-end & Reacher-2D; Planar cursor control; pick and place \\
\bottomrule
\end{tabular}
\end{adjustbox}
\end{table}

\subsection{Sequential vs. Integrated Approaches}

Most existing systems employ sequential or parallel architectures that separate goal inference from assistance determination. Dragan and Srinivasa \cite{dragan2013policy} introduced policy-blending with linear interpolation between human and autonomous control, providing mathematical justification for the blending formulation but employing a fixed blending parameter.

Several actor-critic approaches treat goal probabilities as fixed inputs and learn only the blending policy. The most prominent example is The Disagreement method of \citet{oh2021disagreement}, implementing autonomy switching based on a predefined modality test over von Mises distributions. Our work addresses this gap in two ways: (i) we integrate intent inference within the same unified architecture, allowing for joint optimization of inference and control parameters, and (ii) we provide theoretical regret bounds for the integrated formulation, demonstrating quantifiable advantages over sequential approaches. Recent work by McMahan et al. \cite{IDA} introduced interventional diffusion assistance (IDA) that employs a diffusion model copilot and only intervenes when the expert's action is deemed superior to the human's.

We chose IDA and Reddy’s DQN because they are strong, widely used model-free baselines for shared autonomy and thus provide informative tests of BRACE. IDA is a recent method with demonstrated gains in simulation and user studies; it explicitly targets the autonomy–assistance trade-off and serves as a benchmark for learned arbitration. Reddy’s DQN is the standard end-to-end model-free RL baseline in the community; using it tests whether BRACE’s explicit belief conditioning offers benefits beyond an implicit intent learner.

\section{Methodology}

We model assistance allocation as a policy over the blend parameter. 
For state $s$, belief $b$ over goals, and context $c$, BRACE outputs $\gamma\in[0,1]$. Fig.~\ref{fig:sharedautonomy} shows the data flow: a Bayesian module updates $b$, while an actor–critic uses $(s,b,c)$ to set $\gamma$. 
Joint training allows performance gradients to shape both the belief and the arbitration policy.

\begin{figure}[htbp]
    \centering
    \includegraphics[width=0.75\linewidth]{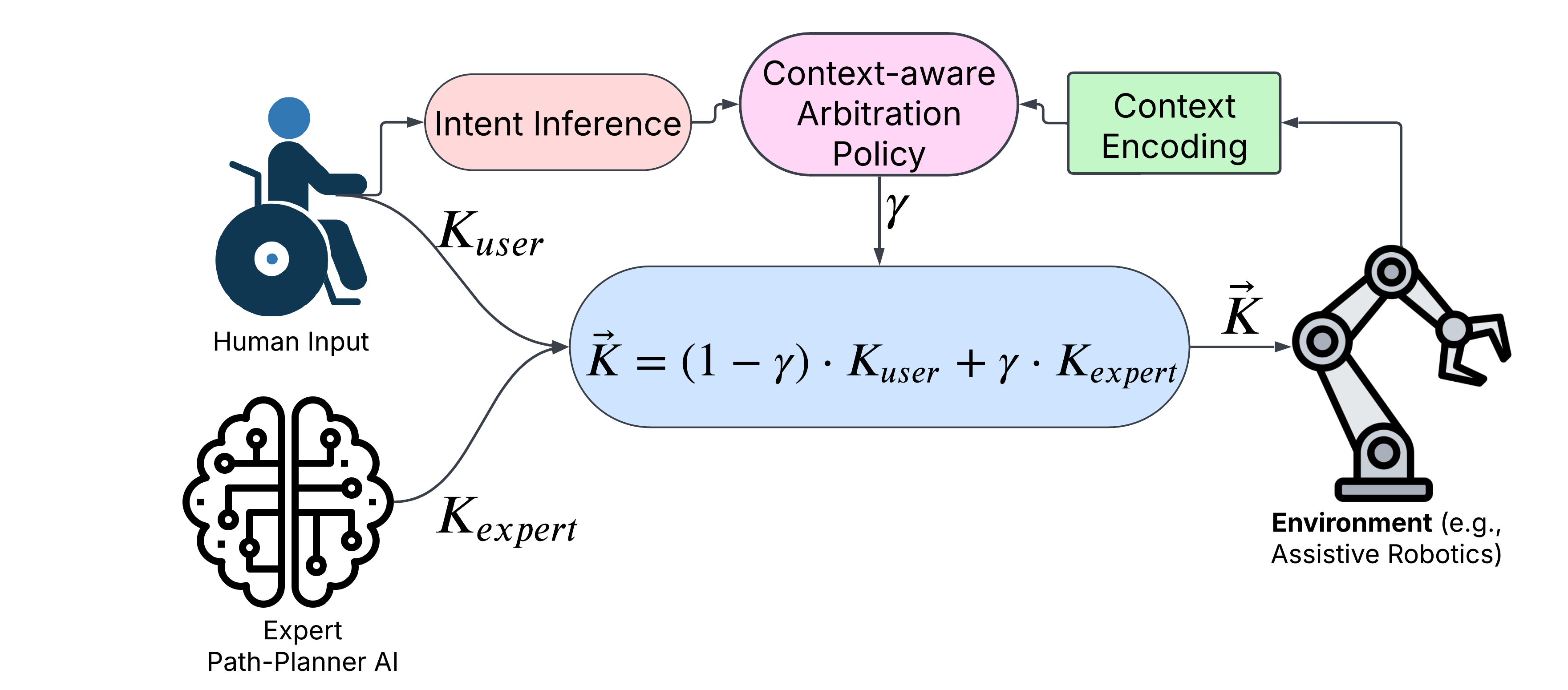}
    \caption{A general formulation of our shared autonomy framework: adaptive blending of human kinematic input ($K_{user}$) and expert policy's kinematic input ($K_{expert}$) via  $\gamma$ in Shared Autonomy frameworks.}
    \label{fig:sharedautonomy}
\end{figure}

The human's goal $g^* \in G$ is unknown to the system and must be inferred from observed inputs. We maintain a belief state $b_t(g)$ representing the probability distribution over potential goals at time $t$. The key challenge is effectively using this belief distribution to determine the optimal assistance level $\gamma$ that maximizes expected performance across goal uncertainty and environmental context.

We define $\gamma$ as $f_\theta(s, b_t, c(s))$ where $s$ is the current environment state, $b_t$ is the belief over goals, $c(s)$ represents environmental constraints (e.g., obstacle proximity), and $\theta$ are the parameters of function $f$. Recent works \cite{yousefi2025} have similarly explored adaptive reinforcement learning-based control using proximal policy optimization for shared autonomy systems, though without the explicit Bayesian belief integration we propose. This formulation can be viewed as a special case of a Partially Observable Markov Decision Process (POMDP) where the hidden state is the true goal, but we relax assumptions about known dynamics or predefined goal sets that characterized earlier approaches \cite{javdani}.

We implement our approach using a dual-head neural network architecture that processes state observations with specialized modules for goal inference and assistance policy optimization. As shown in Figure \ref{fig:architecture}, the network consists of two main components: a Bayesian inference module and a policy optimization Actor Critic network.

\begin{figure}[htbp]
    \centering
    \includegraphics[width=\linewidth]{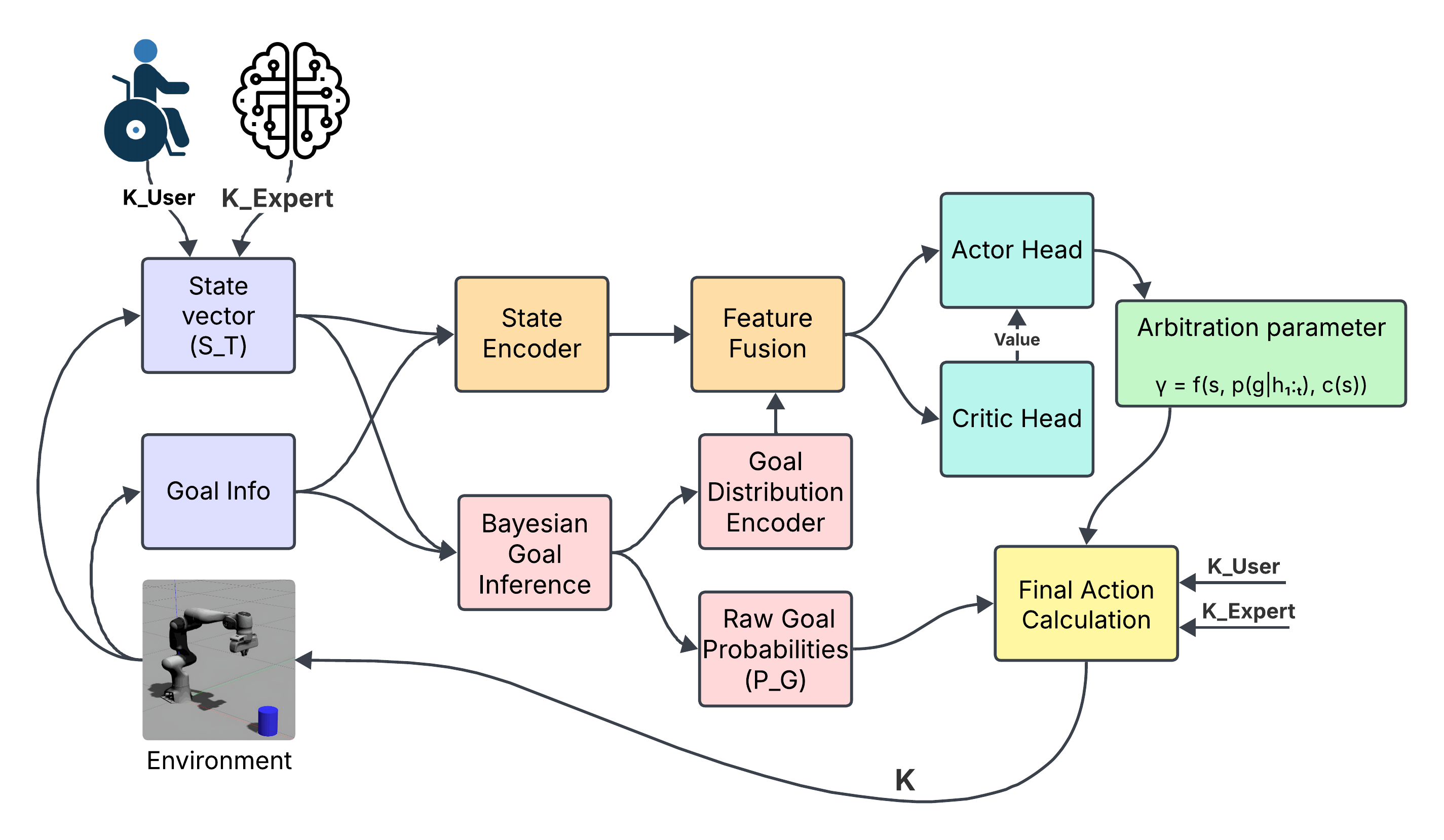}
    \caption{Architectural overview of our dual-head shared autonomy system. The network integrates a Bayesian goal inference module with a context-adaptive assistance policy learned through actor-critic reinforcement learning. The Bayesian inference module calculates belief states which are passed to the policy module to condition assistance decisions.}
    \label{fig:architecture}
\end{figure}

This architecture enables the policy to utilize the complete belief distribution—including entropy and multi-modal probability structures—rather than relying solely on MAP estimates. Empirical validation in Section 4.3 confirms this advantage, particularly in high-uncertainty scenarios.

\subsection{Training Procedure}
To leverage the complete belief distribution and enable joint optimization of both inference and control modules, we employ an integrated training procedure. Algorithm~\ref{alg:training} details our approach, which combines Bayesian inference with reinforcement learning through a curriculum-based training strategy.

\begin{algorithm}
\caption{BRACE Training Algorithm}
\label{alg:training}
\begin{algorithmic}[1]
\State Initialize parameters for Bayesian inference module and policy module
\State Pretrain Bayesian inference module
\State Initialize training mode (baseline or end-to-end)
\For{each curriculum stage}
    \State Initialize environment seed with appropriate difficulty
    \For{episode = 1 to N}
        \State Sample goal configuration and reset environment
        \State Initialize belief state $b_0$ with uniform prior over goals
        \For{t = 0 to T}
            \State Observe state $s_t$ and simluated human action $h_t$
            \State Update belief $b_t$ using Bayesian inference module
            \State Compute assistance level $\gamma_t$ and value estimate $V(s_t)$ from policy module
            \State Execute action $a_t = (1-\gamma_t)h_t + \gamma_t w_t$
            \State Observe reward $r_t$ and next state $s_{t+1}$
            \State Store transition $(s_t, b_t, \gamma_t, r_t, s_{t+1}, b_{t+1})$
        \EndFor
        \State Compute advantages and returns
        \State Update policy parameters using PPO objective
        \State Compute combined supervised and reinforcement loss
        \State Update inference module parameters through REINFORCE
        \State Apply training stability mechanisms
    \EndFor
    \State Advance to next curriculum setting
\EndFor
\end{algorithmic}
\end{algorithm}

The training procedure integrates both the inference and control aspects of our system. Lines 10-15 represent the core interaction loop, where the belief state is updated based on human actions and the assistance policy is executed. Gradients flow from the policy outputs back to the inference module parameters (line 19-20), enabling the Bayesian module to adapt based on downstream control performance. Finally, curriculum learning (lines 4-23) progressively increases task difficulty, improving learning efficiency for complex scenarios.

Unlike sequential approaches that separate goal inference from assistance arbitration, our joint optimization enables the full belief distribution to influence control decisions while simultaneously adapting inference parameters based on control outcomes.

\subsection{Theoretical Properties of Assistance Policy}

We denote $b$ for the intent belief, $c$ for constraint severity, $U_g(\gamma;s,c)$ for the goal-conditioned utility, $\overline U(\gamma;s,b,c)=\mathbb{E}_{g\sim b}[U_g(\gamma;s,c)]$, and
\[
\gamma^\star(s,b,c) \;\;=\;\; \arg\max_{\gamma\in[0,1]} \overline U(\gamma;s,b,c).
\]
Assumptions: (A1) $U_g(\gamma)$ is twice continuously differentiable and strongly concave in $\gamma$ near $\gamma^\star$; (A2) mixed partials $\partial^2 U_g/\partial \gamma \partial c \ge 0$ (assistance becomes more valuable as constraints tighten); (A3) we can reparameterize belief concentration by $\lambda$ with $dH(b_\lambda)/d\lambda<0$.

\begin{theorem}[Uncertainty and constraint monotonicity]
Under (A1)–(A3), the optimal assistance satisfies
\[
\frac{\partial \gamma^\star}{\partial H(b)} \;<\; 0
\qquad\text{and}\qquad
\frac{\partial \gamma^\star}{\partial c} \;>\; 0.
\]
\end{theorem}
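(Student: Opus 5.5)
The plan is to use the implicit function theorem on the first-order condition for an interior optimum. By (A1), $\overline U(\cdot;s,b,c)$ is a $b$-weighted average of functions that are $C^2$ and strongly concave near $\gamma^\star$. Hence $\overline U$ is itself $C^2$ and strongly concave there, and when $\gamma^\star\in(0,1)$ it is characterized by
\[
F(\gamma,\lambda,c) \;:=\; \partial_\gamma \overline U(\gamma;s,b_\lambda,c) \;=\; \sum_{g} b_\lambda(g)\,\partial_\gamma U_g(\gamma;s,c) \;=\; 0,
\qquad \partial_\gamma F \;=\; \mathbb{E}_{g\sim b_\lambda}\bigl[\partial^2_{\gamma} U_g\bigr] \;\le\; -\mu \;<\; 0 .
\]
Since $\partial_\gamma F\neq 0$, the implicit function theorem makes $\gamma^\star$ a $C^1$ function of $(\lambda,c)$, with
\[
\frac{\partial\gamma^\star}{\partial c} = -\frac{\partial_c F}{\partial_\gamma F},\qquad
\frac{\partial\gamma^\star}{\partial \lambda} = -\frac{\partial_\lambda F}{\partial_\gamma F}.
\]
Boundary optima, $\gamma^\star\in\{0,1\}$, I would handle separately. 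There the KKT conditions give weak monotonicity, and I would state the theorem for the interior case.

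For the constraint part, $\partial_c F=\mathbb{E}_{g\sim b}[\partial^2 U_g/\partial\gamma\partial c]\ge 0$ by (A2). Dividing by $-\partial_\gamma F>0$ gives $\partial\gamma^\star/\partial c\ge 0$. The strict inequality in the statement needs the mixed partial to be strictly positive on a set of goals with positive $b$-mass. I will make this explicit by strengthening (A2) to a strict inequality at $\gamma^\star$; this is the standard Topkis/Milgrom–Shannon supermodularity argument.

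For the entropy part, I would pass through $\lambda$ with the chain rule: $\partial\gamma^\star/\partial H = (\partial\gamma^\star/\partial\lambda)\big/(dH/d\lambda)$. Since $dH/d\lambda<0$ by (A3), it suffices to show $\partial\gamma^\star/\partial\lambda>0$, i.e. $\partial_\lambda F>0$. Here
\[
\partial_\lambda F \;=\; \sum_g \frac{db_\lambda(g)}{d\lambda}\,\partial_\gamma U_g(\gamma^\star;s,c),
\]
which is a covariance-type term: it is positive exactly when concentrating the belief moves mass toward goals whose marginal value of assistance at $\gamma^\star$ is higher.

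This is the main obstacle. (A1)–(A3) alone do not determine the sign, because a generic reparameterization could concentrate mass on a goal that prefers \emph{less} assistance. My first attempt would be to make explicit the modeling premise implicit in (A3): the concentration path moves mass toward the true or MAP goal $g^*$, whose utility is the one for which autonomous assistance is most valuable. A natural instance is $b_\lambda = (1-\lambda)\,u + \lambda\,\delta_{g^*}$ with $u$ uniform, or a tempered family $b_\lambda\propto b^{\lambda}$. The intuition is that the autonomous action $w$ is aimed at a goal estimate: assistance helps $g^*$ and hurts the other goals. Formally, I would posit $\partial_\gamma U_{g^*}(\gamma^\star)\ge \partial_\gamma U_g(\gamma^\star)$ for all $g$. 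Then for the mixture family $\partial_\lambda F = \partial_\gamma U_{g^*} - \mathbb{E}_u[\partial_\gamma U_g]>0$, and the general case follows the same way, since $db_\lambda/d\lambda$ has positive weight on high-marginal goals and sums to zero.

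With that alignment condition stated as a clarifying hypothesis, the two sign computations combine to give $\partial\gamma^\star/\partial H(b)<0$ and $\partial\gamma^\star/\partial c>0$.
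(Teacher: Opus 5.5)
Your proposal is correct and is essentially the paper's own proof. It uses the same implicit-function-theorem computation on the first-order condition and the same mixture family $b_\lambda=(1-\lambda)u+\lambda\delta_{g^*}$. It also adds the same alignment hypothesis, which the appendix calls ``Expert Efficiency'' and states in the averaged form $\Delta U_{g^*}>\frac{1}{|G|}\sum_g \Delta U_g$ that your mixture computation actually uses; your pointwise version gives strictness only if some goal is strictly below $g^*$. The $c$-part is likewise handled through (A2), and you are right that strict monotonicity there needs a strict form of (A2).

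The one overstatement is that ``the general case follows the same way.'' For a tempered path $b_\lambda\propto b^\lambda$ you get $\partial_\lambda F=\mathrm{Cov}_{b_\lambda}(\log b(g),\partial_\gamma U_g)$, and maximality of $\partial_\gamma U_{g^*}$ alone does not fix its sign once $|G|\ge 3$. So keep the claim restricted to the mixture family, as the paper does.
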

\begin{proofsketch}
Let $F(\gamma,\lambda,c)=\partial_\gamma \overline U(\gamma; b_\lambda,c)$ with $F(\gamma^\star,\lambda,c)=0$. By the implicit function theorem,
\[
\frac{d\gamma^\star}{d\lambda}
= -\frac{\partial_{\lambda}\partial_{\gamma}\overline U(\gamma^\star; b_\lambda,c)}
{\partial_{\gamma}^2 \overline U(\gamma^\star; b_\lambda,c)}.
\]
The denominator is negative by strong concavity. Increasing $\lambda$ concentrates $b_\lambda$, aligning mass with goalwise optima and increasing $\partial_{\lambda}\partial_{\gamma}\overline U>0$, hence $d\gamma^\star/d\lambda>0$ and $d\gamma^\star/dH<0$. For $c$, monotone comparative statics with $\partial^2 U_g/\partial\gamma\partial c\ge 0$ yields $\partial \gamma^\star/\partial c>0$. Full details in App.~A.
\end{proofsketch}

\begin{theorem}[Integrated optimization advantage]
Let $\gamma_g^\star=\arg\max_{\gamma} U_g(\gamma;s,c)$. For a sequential scheme that chooses $\hat g=\arg\max b$ and uses $\gamma_{\hat g}^\star$, the expected regret gap versus belief-conditioned $\gamma^\star$ satisfies
\[
\mathbb{E}_{g\sim b}\!\big[U_g(\gamma_g^\star)-U_g(\gamma^\star)\big]
\;=\;\tfrac{1}{2}\,\mathbb{E}_{g\sim b}\!\big[\,|U_g''(\tilde\gamma)|\,(\gamma_g^\star-\gamma^\star)^2\big],
\]
for some $\tilde\gamma$ between $\gamma^\star$ and $\gamma_g^\star$. The gap vanishes if all goals share the same optimal assistance $\gamma_g^\star$.
\end{theorem}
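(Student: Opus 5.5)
The identity in the statement is a second-order Taylor expansion of each goal-conditioned utility around its own maximizer $\gamma_g^\star$. It does not depend on the MAP choice at all: it concerns the belief-conditioned $\gamma^\star$ compared with the goalwise oracle. The sequential scheme enters only through an additional comparison, made at the end.

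Fix $(s,c)$ and a goal $g$ in the support of $b$. By (A1), $U_g$ is $C^2$ on $[0,1]$, so I will apply Taylor's theorem with Lagrange remainder at $\gamma_g^\star$:
\[
U_g(\gamma^\star)=U_g(\gamma_g^\star)+U_g'(\gamma_g^\star)(\gamma^\star-\gamma_g^\star)+\tfrac12 U_g''(\tilde\gamma_g)(\gamma^\star-\gamma_g^\star)^2 ,
\]
with $\tilde\gamma_g$ between $\gamma^\star$ and $\gamma_g^\star$. For an interior maximizer, the first-order condition gives $U_g'(\gamma_g^\star)=0$. Concavity on the segment gives $U_g''(\tilde\gamma_g)\le 0$, so $-\tfrac12U_g''=\tfrac12|U_g''|$.

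Rearranging and taking $\mathbb{E}_{g\sim b}$, which is a finite sum over $G$, yields the displayed formula. The intermediate point $\tilde\gamma=\tilde\gamma_g$ may depend on $g$, and the statement should be read that way. If all $\gamma_g^\star$ coincide at a common value $\bar\gamma$, then every $U_g$ is maximized at $\bar\gamma$, so $\overline U$ is maximized there. Strict concavity makes this maximizer unique, so $\gamma^\star=\bar\gamma$, each squared term vanishes, and the gap is zero.

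The main obstacle is twofold, and I would handle both parts with explicit caveats.

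First, (A1) only asserts strong concavity \emph{near} $\gamma^\star$, while the expansion needs $U_g''\le0$ on the whole segment between $\gamma^\star$ and $\gamma_g^\star$, and needs $\gamma_g^\star$ to be interior so that $U_g'(\gamma_g^\star)=0$. I would strengthen (A1) to concavity of each $U_g$ on $[0,1]$ with interior maximizers. If a maximizer lies on the boundary, the linear term $U_g'(\gamma_g^\star)(\gamma^\star-\gamma_g^\star)$ is $\le0$. The regret then satisfies $\ge$ the quadratic term rather than equality, which still suffices for the advantage claim.

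Second, to relate this to the sequential MAP scheme, I will run the same expansion with $\gamma_{\hat g}^\star$ in place of $\gamma^\star$. Because $\gamma^\star$ maximizes $\overline U$, we have $\mathbb{E}_b[U_g(\gamma^\star)]\ge\mathbb{E}_b[U_g(\gamma_{\hat g}^\star)]$. Hence the oracle regret of the MAP scheme, $\tfrac12\mathbb{E}_b[|U_g''|(\gamma_g^\star-\gamma_{\hat g}^\star)^2]$, dominates that of $\gamma^\star$.

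With curvature bounded between $m$ and $M$, I will sandwich both regrets between multiples of $\mathrm{Var}$-type quantities:
\begin{itemize}
\item the belief-conditioned regret is about $\mathbb{E}_b(\gamma_g^\star-\gamma^\star)^2$, where $\gamma^\star$ is near the curvature-weighted mean of the $\gamma_g^\star$;
\item the MAP regret is $\mathbb{E}_b(\gamma_g^\star-\gamma_{\hat g}^\star)^2$, which is that variance plus the squared bias $(\gamma_{\hat g}^\star-\text{mean})^2$.
\end{itemize}
This quantifies the quadratic advantage.
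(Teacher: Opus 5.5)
Your proof is correct and follows the paper's route: you expand each $U_g$ to second order about its own maximizer $\gamma_g^\star$ so that the first-order condition removes the linear term, average over $b$, and obtain MAP dominance from the maximality inequality $\mathbb{E}_{g\sim b}[U_g(\gamma^\star)]\ge\mathbb{E}_{g\sim b}[U_g(\gamma_{\hat g}^\star)]$, exactly as in Appendix~A. Your added care (interior maximizers, a $g$-dependent $\tilde\gamma$, only an inequality at the boundary) is warranted, and your variance-plus-squared-bias accounting is more accurate than the appendix's closing formula, which is essentially the Taylor expansion of $R_{\mathrm{MAP}}$ itself and so overstates $R_{\mathrm{MAP}}-R_{\mathrm{integrated}}$ by about $R_{\mathrm{integrated}}$.
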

\begin{proofsketch}
Second-order Taylor expansion of $U_g(\gamma)$ around $\gamma^\star$ (or $\gamma_g^\star$) gives
\[
U_g(\gamma_g^\star)-U_g(\gamma^\star)
= \tfrac{1}{2} |U_g''(\tilde\gamma)|\,(\gamma_g^\star-\gamma^\star)^2.
\]
Taking $\mathbb{E}_{g\sim b}$ yields the stated identity, showing a \emph{quadratic} advantage that grows with dispersion of $\{\gamma_g^\star\}$ and curvature $|U_g''|$. If $\gamma_g^\star$ is constant in $g$, the term is zero. Full proof and regularity conditions in App.~A.
\end{proofsketch}

Classical Bayesian adaptive control treats inference as an external estimator and optimizes control on a point estimate or a fixed belief update. Here  intent inference is jointly optimized with control; Theorems above formalize the resulting uncertainty/constraint monotonicity and the quadratic regret gap. Appendix~A contains extended proofs for the theorems.

\subsection{Bayesian Inference Module}

The Bayesian inference module implements recursive Bayesian filtering to maintain and update a probability distribution over potential goals. For a set of potential goals $G = \{g_1, g_2, ..., g_N\}$, the model updates the posterior probability of each goal $g_i$ given the trajectory of agent positions $X = \{x_1, x_2, ..., x_t\}$ and human inputs $H = \{h_1, h_2, ..., h_t\}$:

\begin{equation}
P(g_i|X,H) \propto P(g_i) \prod_{j=1}^{t} P(h_j|x_j,g_i)
\end{equation}

The likelihood function $P(h_j|x_j,g_i)$ models the human as a noisy-rational agent, with higher probabilities assigned to inputs that move the agent toward the goal:

\begin{equation}
P(h_j|x_j,g_i) \propto \exp(-\beta \cdot \text{cost}(h_j|x_j,g_i))
\end{equation}

where $\text{cost}(h_j|x_j,g_i)$ measures the deviation of the human's input from the optimal action for reaching goal $g_i$ from position $x_j$, and $\beta$ is a rationality parameter determining how closely the human follows the optimal policy. The cost function combines angular deviation and distance deviation as detailed in the supplementary materials.
The angular deviation $\theta_{dev}$ in our cost function inside the Bayesian module is calculated as the absolute angle between the human input vector $\vec{h}_j$ and the optimal direction vector from position $x_j$ to goal $g_i$:

\begin{equation}
\theta_{dev} = \left|\arccos\left(\frac{\vec{h}_j \cdot \vec{v}_{x_j\rightarrow g_i}}{|\vec{h}_j|\cdot|\vec{v}_{x_j\rightarrow g_i}|}\right)\right|
\end{equation}

where $\vec{v}_{x_j\rightarrow g_i}$ is the unit vector pointing from position $x_j$ to goal $g_i$.

The distance deviation $d_{dev}$ measures input magnitude deviation from optimal magnitude:

\begin{equation}
d_{dev} = \left|1 - \frac{|\vec{h}_j|}{h_{opt}}\right|
\end{equation}

where $h_{opt}$ is the optimal input magnitude given the distance to the goal, modeled as a function that decreases as the agent approaches the goal.

The cost function is a weighted combination of these deviations:

\begin{equation}
\text{cost}(h_j|x_j,g_i) = w_\theta \cdot \theta_{dev} + w_d \cdot d_{dev}
\end{equation}

where $w_\theta = 0.7$ and $w_d = 0.3$ are weighting parameters.

The Bayesian goal inference model was pre-calibrated using a dataset of 1968 cursor trajectories with known goal labels. We employed two-stage parameter estimation: a coarse grid search followed by Bayesian optimization. Optimal parameters were determined as: $\beta = 10.0$, $w_\theta = 0.7$, $w_d = 0.3$, with uniform priors over goals. To enhance robustness, we implemented temporal smoothing (goal probabilities smoothed using exponential moving average with decay rate $\alpha = 0.85$) and confidence calibration (isotonic regression on a validation set).
The warm started Bayesian model achieved 74.6\% accuracy at 25\% path completion, 91.2\% accuracy at 50\% path completion, and 98.7\% accuracy at 75\% path completion on a held-out test set of 194 trajectories.

\subsection{Assistance arbitration policy}
The actor-critic network consists of two 256-neuron hidden layers using ReLU activations—an actor network that outputs the continuous blending parameter and a parallel critic network for value estimation. This structure processes a 10-dimensional vector comprising state, human input, goal positions, and contextual distances, enabling context-sensitive assistance modulation. The policy is trained using Proximal Policy Optimization (PPO) \cite{schulman2017proximal} to maximize a reward function that balances several key components:

\begin{equation}
R = w_c R_c + w_p R_p + w_g R_g + w_a R_a + w_{gl} R_{gl}
\end{equation}

where each term represents a specific aspect of the shared control objective. Reward weights and network hyper-parameters are tuned via random search; The complete reward function with all terms and weights is detailed in Appendix~\ref{sec:detailed_reward}.
For assistance optimization, our PPO component includes an input layer combining features from the observation space and current goal probability distribution. The architecture uses two hidden layers with 256 neurons each and ReLU activation, followed by an output layer with a single neuron with tanh activation, mapping to $[-1,1]$ (rescaled to $[0,1]$ to represent $\gamma$). A separate value network with identical structure outputs a value estimate, The architecture of this PPO module is shown in Fig.~\ref{fig:actor_critic_arch}

\begin{figure}[htbp]
    \centering
    \includegraphics[width=\linewidth]{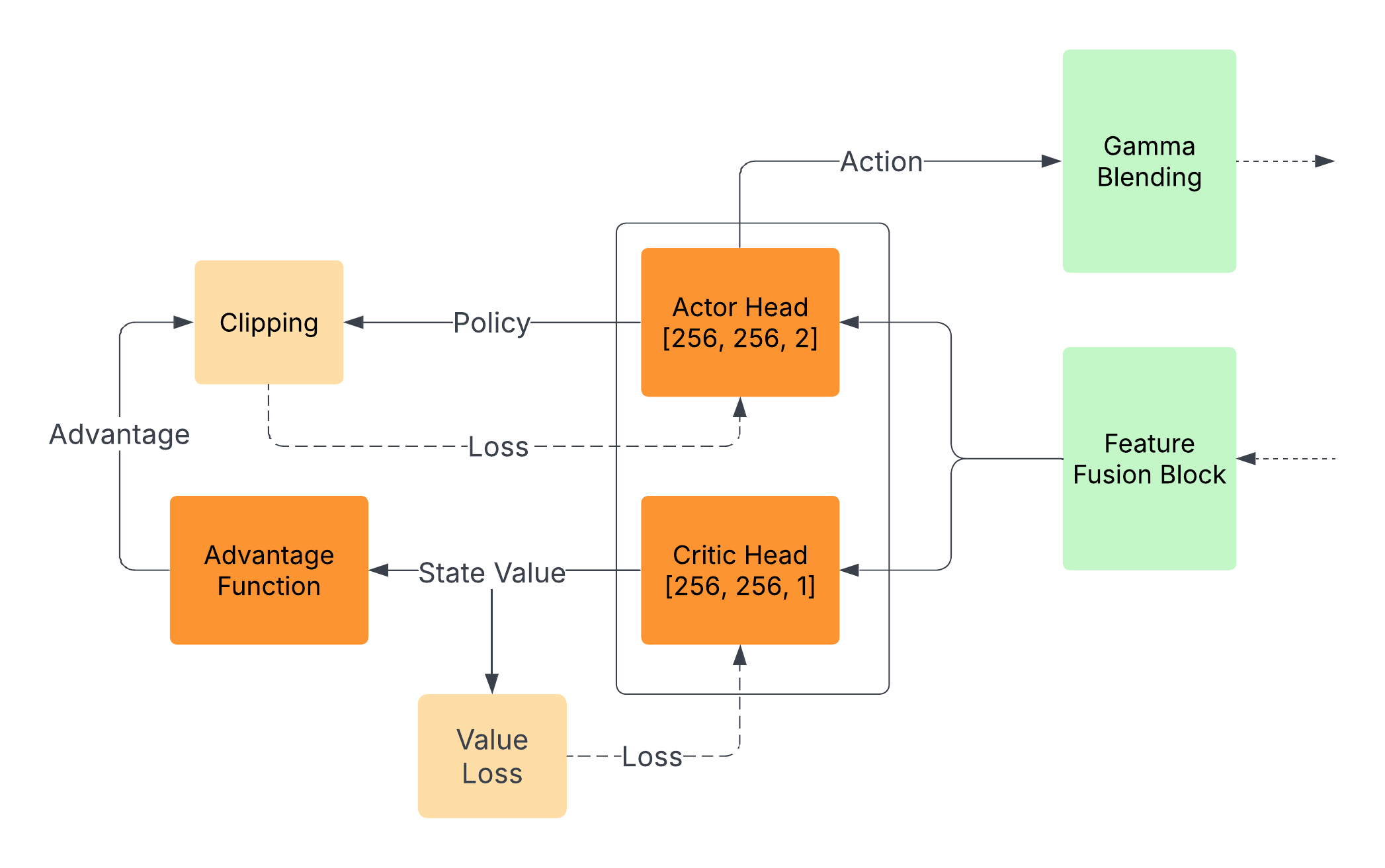}
    \caption{Architecture of the assistance arbitration PPO. The policy module consists of an Actor Head generating the blending parameter $\gamma$ and a Critic Head estimating state value. Both share a Feature Fusion Block that processes combined state and belief information.}
    \label{fig:actor_critic_arch}
\end{figure}

Key hyperparameters include learning rate of $3 \times 10^{-4}$ with cosine annealing, batch size of 1024 samples, 4 epochs per batch, discount factor of 0.99, GAE parameter of 0.95, and clip ratio of 0.2.

\subsection{Expert Policy}
\label{sec:appendix_expert_policy}

Our BRACE framework relies on an expert policy that acts as an oracle and provides perfect-path guidance to the assistance module. While the assistance arbitration module learns to blend human and AI control, the expert policy focuses on optimal goal-directed trajectories regardless of human input.

For the Planar cursor control task, the expert architecture consisted of a 4-layer network with [256, 256, 128, 64] neurons using ReLU activations and layer normalization.  We sampled 128 configurations across the hyperparameter space, optimizing learning rate ($3.2 \times 10^{-4}$), entropy coefficient (0.01), GAE parameter (0.92), and discount factor (0.99). To ensure reproducible optimization, we employed fixed seeds and evaluated each candidate configuration across 10 randomized environments, selecting the configuration that achieved the highest mean reward with lowest variance. As shown in Table~\ref{tab:adaptation}, BRACE maintained consistently high performance (within 8\% of optimal) even when paired with experts whose performance degraded by up to 37\%.

The expert's reward function balances goal progress, smoothness, and safety through:

\begin{equation}
R_{expert}(s, a) = 3.0 \cdot \frac{d_{t-1} - d_t}{d_{max}} - 0.8 \cdot \|\Delta\theta\|^2 - 2.5 \cdot \exp\left(-\frac{\min_{o \in \mathcal{O}} \|s - o\|}{d_{safe}}\right)
\end{equation}

\noindent where $d_t$ is distance to the nearest goal, $\|\Delta\theta\|^2$ penalizes angular acceleration for smooth trajectories, and the exponential term creates repulsive force around obstacles $\mathcal{O}$. This formulation ensures optimal path generation while maintaining safety.

During online operation, the expert's policy outputs represent the optimal action at each state, providing a perfect benchmark against which human inputs can be compared. The quality of this expert policy directly impacts BRACE's ability to provide appropriate assistance, particularly in high-gamma scenarios where the system relies heavily on the expert's guidance.

To demonstrate that BRACE genuinely adapts to user input rather than simply copying the expert policy, we conducted a series of controlled experiments using a hindsight optimization expert \cite{javdani}  as a benchmark. We varied this expert's optimality by introducing artificial constraints that limited its planning horizon and adding simulated delays to its reactions, creating a spectrum of expert policies ranging from optimal to significantly sub-optimal. Our results revealed that BRACE maintained consistently high performance (within 8\% of optimal) even when paired with experts whose performance degraded by up to 37\%. This resilience indicates true adaptation rather than imitation, as a simple copying approach would show proportional performance degradation. 

\begin{table}[h]
\centering
\caption{BRACE Performance with Degraded Expert Policies. The Expert Performance column shows the standalone success rate of expert policies that were artificially constrained with limited planning horizons or simulated delays. BRACE Performance measures the final system's success rate when paired with that specific expert.}
\label{tab:adaptation}
\begin{tabular}{lrrr}
\toprule
\textbf{Expert Policy} & \textbf{Expert Performance} & \textbf{BRACE Performance} & \textbf{Performance Delta} \\
\midrule
Full & 100\% & 98.2\% & -1.8\% \\
Horizon-Limited & 82.4\% & 94.1\% & +11.7\% \\
Delayed & 73.6\% & 92.8\% & +19.2\% \\
Random-Perturbed & 63.1\% & 90.5\% & +27.4\% \\
\bottomrule
\end{tabular}
\end{table}

\section{Results}
\label{sec:experiments}

We designed a three-part evaluation to validate BRACE against various challenges of human-in-the-loop robotic end-effector control, with each experiment isolating a different aspect of the end-effector assistance problem.

(1) The Human-Interaction Challenge (2D Cursor Control): First, we use a simulated 2D cursor-control task as a direct analogy for 2D end-effector positioning (e.g., in screen-based assistive tasks or simplified pick-and-place). This experiment's primary purpose is to isolate and test the core human-robot interaction dynamics. It validates BRACE against the noisy, variable, and unpredictable trajectories of real human users, which is especially critical for modeling assistive control from imperfect biosignals.

(2) The Physical-Dynamics Challenge (Reacher-2D): Then, we advance from simple point-mass physics to address the challenge of more complex physical dynamics. A real-world end-effector is part of a multi-joint kinematic chain. The Reacher-2D benchmark allows us to isolate this challenge, proving that BRACE's arbitration framework is robust enough to handle the non-linear, coupled dynamics common in robotic arm control.

(3) The Integrated 3D Task-Context Challenge (Fetch Pick-and-Place): Finally, we integrate these challenges into a high-dimensional manipulation task. A modified FetchPickAndPlace-v3 environment serves as our testbed for scaling BRACE to an end-effector operating in 3D space. Its purpose is to validate that our framework can jointly reason over the two most critical components of a realistic assistive task: high goal ambiguity (e.g., multiple similar target bins) and localized safety demands (e.g., navigating nearby obstacles during grasp and release).

\subsection{Planar Human-in-the-loop Cursor Control}

Our experiment employed a within-subjects design ($N=12$) where participants performed a goal-directed cursor control task using a DualSense controller. This study compared five different assistance conditions. All participants were required to complete informed consent forms prior to participation, screened for prior neurological diseases that could affect motor performance, and were compensated with a gift card. The study protocol received full approval from our Institutional Review Board (IRB) prior to participant recruitment and data collection. Participants completed all conditions to enable direct comparison of performance metrics across different assistance modes. An a priori power analysis using G*Power (v3.1) with parameters $\alpha=0.05$, power $(1-\beta)=0.8$, and an expected effect size of $f=0.35$ (medium-large) based on pilot testing indicated a minimum sample of 10 participants would be sufficient to detect significant differences. 

\begin{figure}[htbp]
    \centering
    \includegraphics[width=0.8\linewidth]{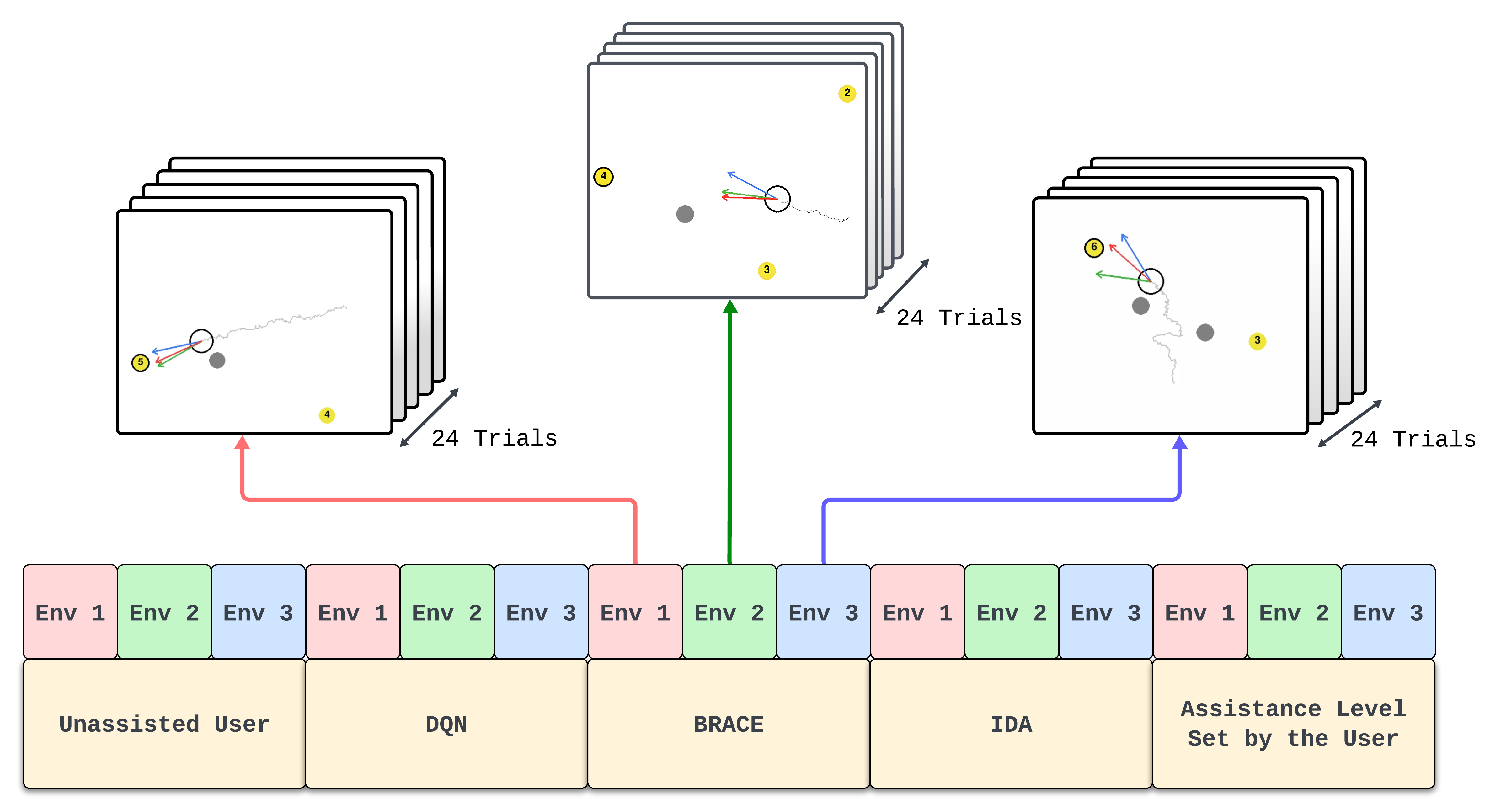}
    \caption{Structure of the human-in-the-loop cursor control user study, showing the distribution of trials across three test environments and five assistance conditions.}
    \label{fig:framework}
\end{figure}  

To create controlled testing environments, we first implemented a procedural environment generation approach, then chose three fixed random seeds that produced consistent scenarios across all participants and conditions. In each environment, goals were placed randomly within margins while maintaining minimum separation. Obstacles were strategically positioned along paths between the start position and goals with perpendicular offsets. This approach created three distinct environments that systematically varied in goal ambiguity (multiple targets in similar directions), constraint severity (narrow passages between obstacles), and path complexity (indirect routes to targets). Each participant completed 24 trials per environment (72 total) across all five assistance conditions. This trial count was determined through Monte Carlo power analysis to achieve 95\% confidence at 80\% power while minimizing fatigue, as individual trials lasted 5-15 seconds.

We used a balanced Latin square design to counterbalance condition and environment presentation order, minimizing potential sequence effects. Performance was measured through success rate, completion time, path efficiency, and throughput. Statistical analysis revealed strong effect sizes with 95\% confidence intervals across all primary metrics: success rate ($d=1.28$, 95\% CI [0.94, 1.62]), completion time ($d=1.41$, 95\% CI [1.07, 1.75]), and path efficiency ($d=1.35$, 95\% CI [1.01, 1.69]). These values indicate large practical significance in the improvements conferred by our approach. To account for individual differences, we employed stratified sampling across participant skill levels and conducted subgroup analyses. While inter-subject variance was noticeable, particularly in the manual condition, the improvement pattern remained consistent across all skill quartiles, with even the lowest-performing participants showing significant benefits from context-adaptive assistance (12.3\% improvement, $p<0.01$).

Participants completed target acquisition tasks under five control conditions across three environments with varied obstacle and goal configurations, performing 24 trials per environment for a total of 4,320 collected trajectories. We compared five control conditions: manual control (no assistance, $\gamma = 0$), Reddy et al.'s DQN approach \cite{DQN}, McMahan et al 's IDA approach \cite{IDA}, user-controlled gamma (where participants manually adjusted assistance level), and our approach BRACE, as illustrated in Figure~\ref{fig:framework}. Performance was evaluated using success rate, completion time, path efficiency, and throughput.

\subsubsection{Quantitative Performance Results}

Table \ref{tab:performance} summarizes the key performance metrics across all conditions.

BRACE outperformed all baseline methods across all metrics. Compared to manual control, BRACE achieved 36.3\% higher success rate (F(4,48) = 27.3, p < 0.001) and 60.9\% reduction in completion time (F(4,48) = 19.8, p < 0.001). BRACE improved over state-of-the-art with 6\% higher success rate, 42\% better path efficiency and 32\% faster completion times. Even compared to user-controlled assistance levels, which performed well due to explicit human modulation of assistance levels, our automatic approach achieved 8.5\% better path efficiency and 26\% faster completion times.

Figure \ref{fig:gamma_comparison} confirms our theoretical prediction that optimal assistance varies with environmental context and goal certainty. High assistance values (yellow regions, $\gamma > 0.8$) appear near obstacles and in narrow passages, while low assistance (blue, $\gamma < 0.3$) occurs in open spaces and areas with goal ambiguity. Temporal analysis shows that assistance starts at 0.28 $\pm$ 0.12 during trajectory initiation when uncertainty is highest, then increases to 0.74 $\pm$ 0.09 as goal inference confidence improves. This dynamic modulation provides advantages over both DQN approaches that use embedded policies and binary intervention methods like IDA, while achieving better consistency than manual user-controlled assistance.

\begin{table}[h]
\caption{Performance metrics across control conditions (Mean $\pm$ (SD across participants))}
\label{tab:performance}
\centering
\begin{tabular}{lccccc}
\toprule
& No assist & DQN & BRACE & IDA & Manual gamma level \\
\midrule
Success (\%) & 72.1 $\pm$ 3.2& 89.8 $\pm$ 2.5& 98.3 $\pm$ 1.7 & 92.5 $\pm$ 2.4& 86.8 $\pm$ 1.9\\
Time (s) & 8.44 $\pm$ 0.31 & 5.62 $\pm$ 0.26& 3.30 $\pm$ 0.22 & 5.07 $\pm$ 0.27& 4.5$\pm$ 0.23\\
Path Eff. & 0.43 $\pm$ 0.05 & 0.75 $\pm$ 0.05 & 0.89 $\pm$ 0.06 & 0.63 $\pm$ 0.06& 0.82 $\pm$ 0.06 \\
Throughput & 1.14 $\pm$ 0.10 & 1.2 $\pm$ 0.10& 1.27 $\pm$ 0.08 & 1.19$\pm$ 0.10& 1.25 $\pm$ 0.09 \\
\bottomrule
\end{tabular}
\end{table}

\begin{figure}[H]
    \centering
    \begin{subfigure}[b]{0.48\textwidth}
        \centering
        \includegraphics[width=\textwidth]{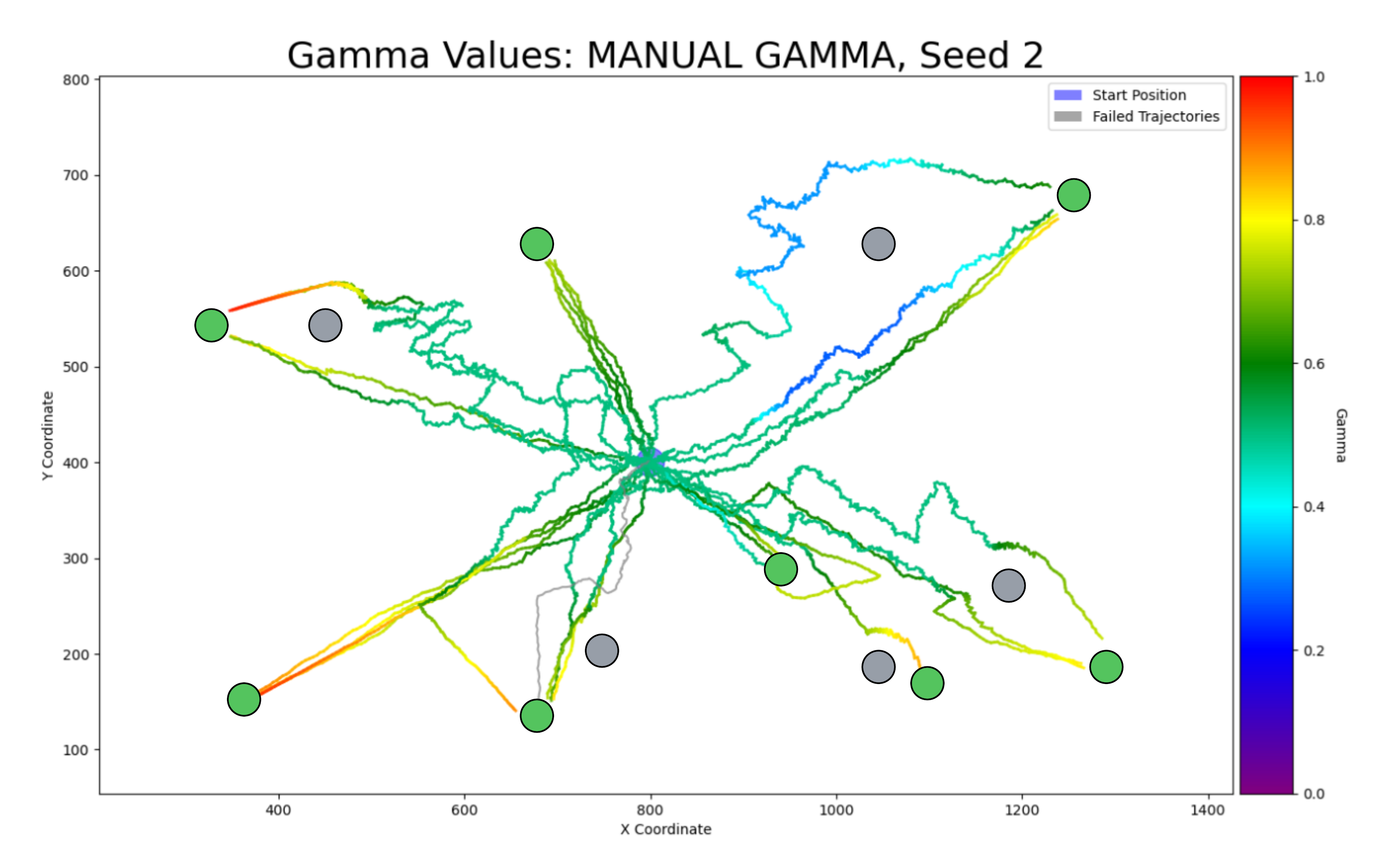}
        \caption{Manual gamma control}
    \end{subfigure}
    \hfill
    \begin{subfigure}[b]{0.48\textwidth}
        \centering
        \includegraphics[width=\textwidth]{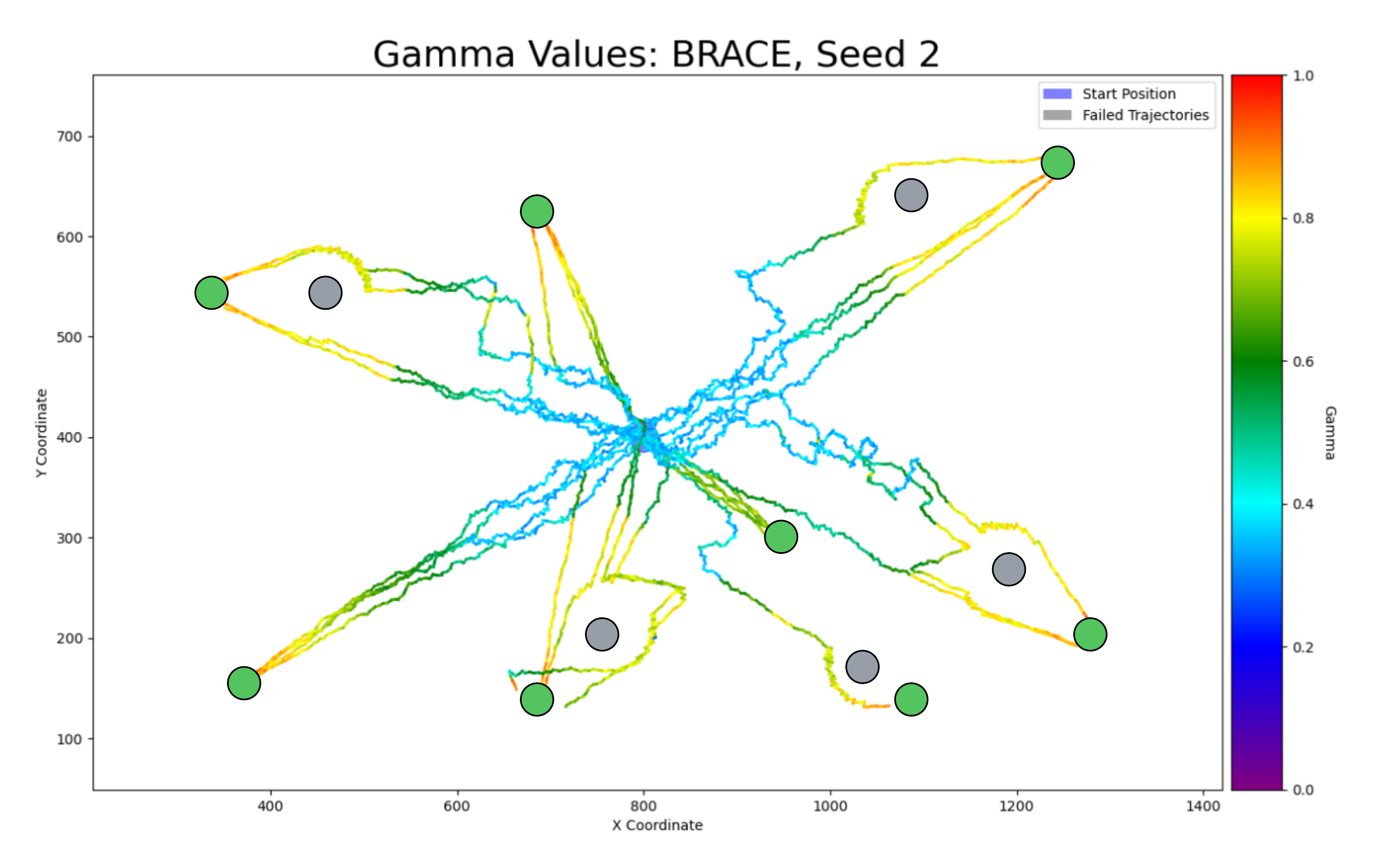}
        \caption{BRACE adaptive control}
    \end{subfigure}
    \caption{Comparison between manual gamma (left) and BRACE (right). Color scale shows assistance level from 0 (purple, human control) to 1 (red, AI control). Green circles are targets, gray circles are obstacles. BRACE dynamically adjusts assistance based on context and goal certainty.}
    \label{fig:gamma_comparison}
\end{figure}

\subsubsection{Qualitative Performance Results}

Post-experiment user feedback, collected via NASA-TLX surveys (visualized in Fig.~\ref{fig:survey_results}), revealed high satisfaction, with BRACE earning the strongest overall subjective profile across ease of use, perceived assistance quality, flexibility, and confidence. This pattern matches the mechanism observed quantitatively: assistance never overrides user input when intent inference confidence is low, and it rises gradually as the intention becomes clearer or environmental constraints increase ($\gamma \approx$ 0.28 to 0.74 over a trial). This behavior ensures user success and reduces cognitive workload without taking over user agency.
While a few participants expressed a preference for unassisted control (a familiar HRI trade-off where manual control feels more predictable \cite{sense, farhadi2025human}) satisfaction with BRACE was notably high. This was largely attributed to measures implemented to enhance transparency. To show the inner workings of BRACE and build user trust, the interface provided real-time feedback by highlighting the active goal perceived by the system, shadowing the recorded trajectory and user history, and showing the real-time amount of assistance provided. These features were designed to enhance transparency without overwhelming the user with extra information.

\begin{figure}[H]
    \centering
    \begin{subfigure}[b]{0.9\textwidth}
        \centering
        \includegraphics[width=\textwidth]{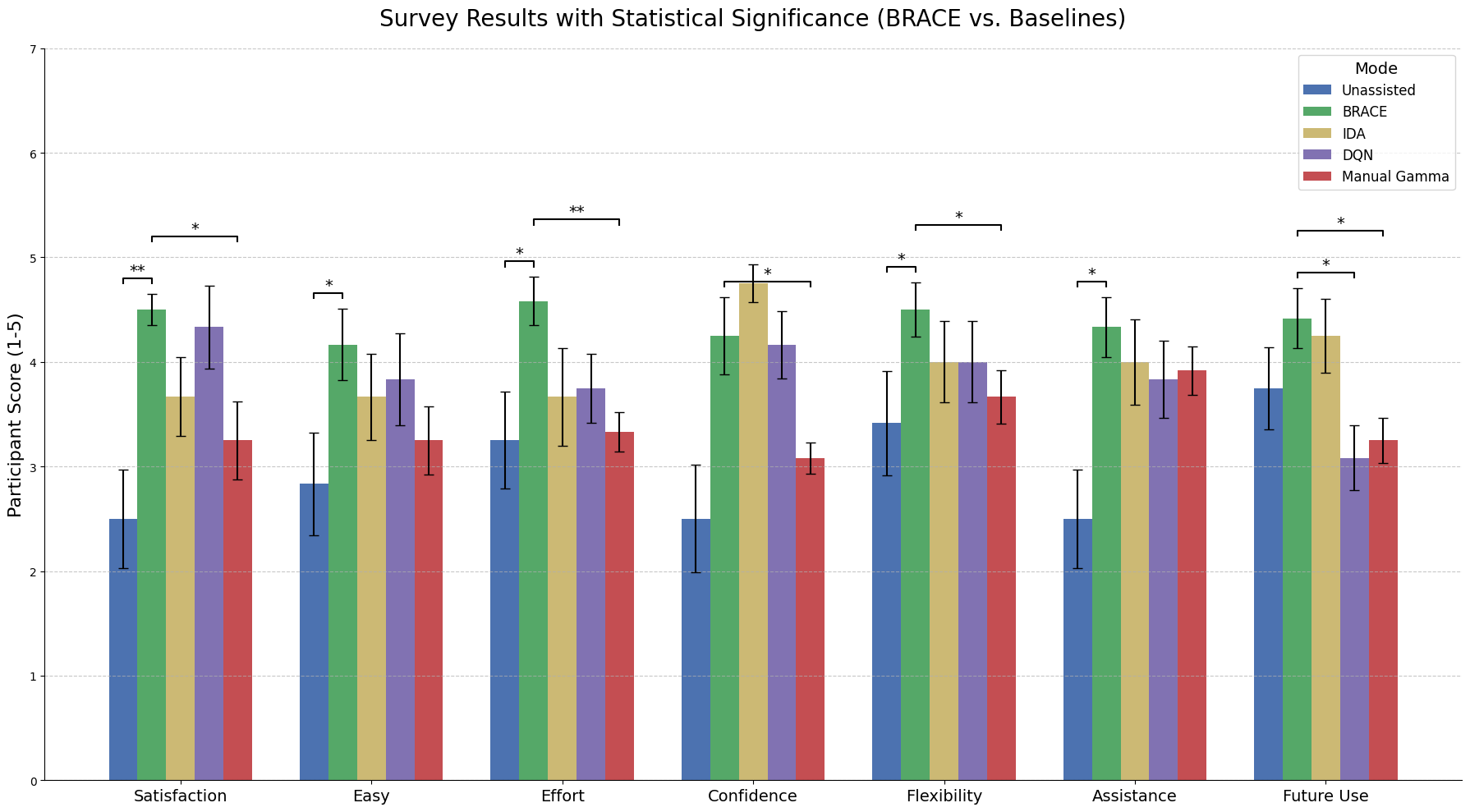}
        \caption{ }
        \label{fig:box}
    \end{subfigure}
    \begin{subfigure}[b]{0.69\textwidth}
        \centering
        \includegraphics[width=\textwidth]{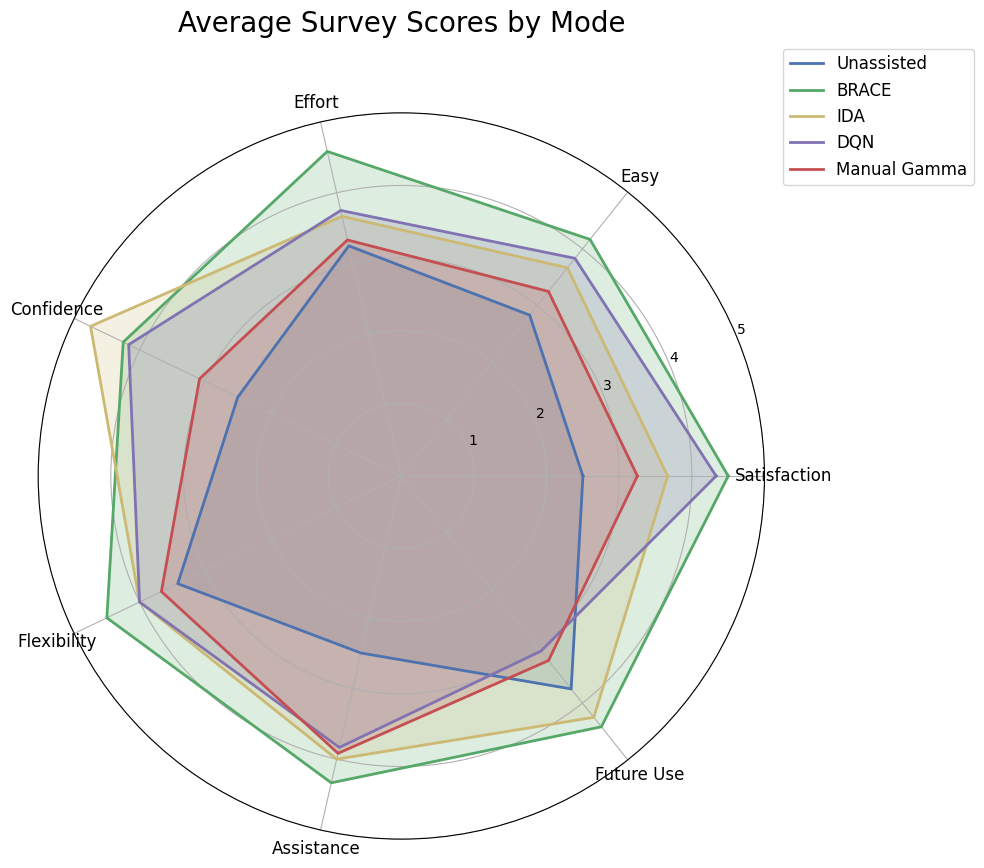}
        \caption{}
        \label{fig:spider}
    \end{subfigure}
    \caption{User interaction qualitative evaluation on the planar cursor control task. Comparing BRACE's performance compared to four alternative control scenarios. (a) subjective ratings across seven dimensions (higher is better) for five modes: Unassisted, BRACE, IDA, DQN, and Manual-$\gamma$. over N=12 participants (within-subject). * p<0.05, ** p<0.01 using a one-way repeated-measures ANOVA with Bonferroni–corrected pairwise t-tests, (b) Average profile of the same ratings by mode (higher is better; Effort reverse-coded).}
    \label{fig:survey_results}
\end{figure}

We also see why IDA scores competitively on “confidence”. By design, IDA intervenes only when the expert action is judged superior to the human’s, which biases the experience toward safety and collision avoidance. Users read this conservatism as reliability, though it can trade off flexibility and speed. In contrast, BRACE blends continuously with both the expert and the user input rather than switching between them, yielding better end-to-end task metrics while maintaining high subjective ratings—especially in ambiguous, constrained layouts where its belief-conditioned policy has a theoretical and empirical advantage.

\subsection{Reacher-2D} 
We also evaluated BRACE in the Reacher 2D environment, a physics-based robotic arm control task where a two-joint arm must reach target positions. To isolate and test the framework's robustness to non-linear dynamics and high-frequency control noise, we modified the environment to include three uniformly distributed goals. To ensure the task required non-trivial path planning, a cylindrical obstacle was placed along the direct line from the start position to each of the three goals.

For the expert policy, we trained a SAC network, similar to the architecture used in \cite{IDA}. For a fair comparison, the IDA baseline was provided with the same SAC-trained expert policy as BRACE. Its intervention threshold was optimized via grid search on 20\% of the trials to maximize its goals-per-minute, ensuring a robust baseline. Our framework performed arbitration over the control allocation between the simulated human pilot and the expert policy.

\begin{figure}[H]
    \centering
    \begin{subfigure}[b]{0.48\textwidth}
        \centering
        \includegraphics[width=\textwidth]{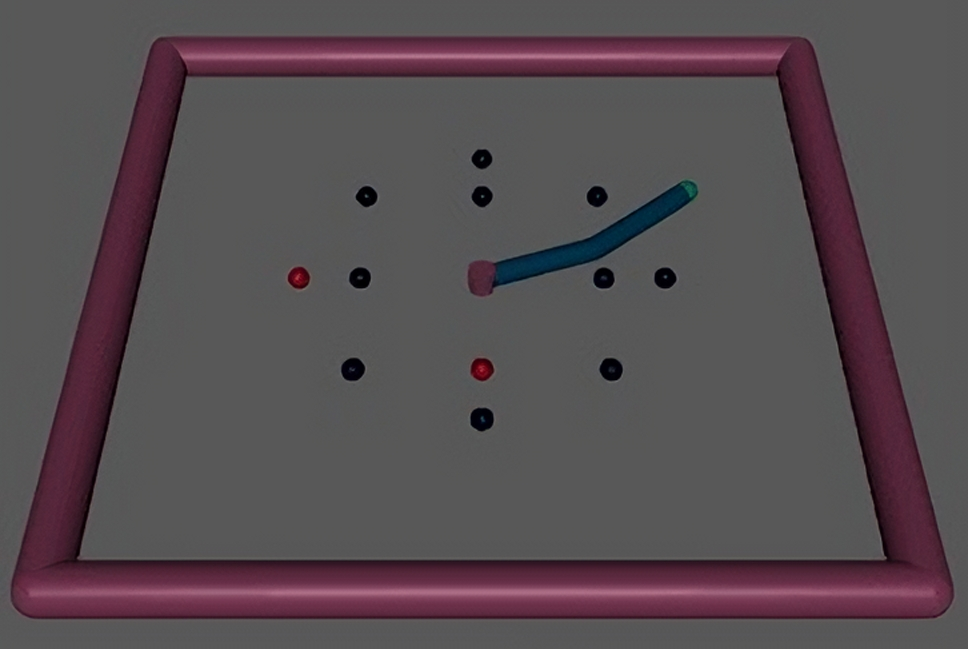}
        \caption{Reacher-2D environment}
        \label{fig:reacher_env}
    \end{subfigure}
    \hfill
    \begin{subfigure}[b]{0.48\textwidth}
        \centering
        \includegraphics[width=\textwidth]{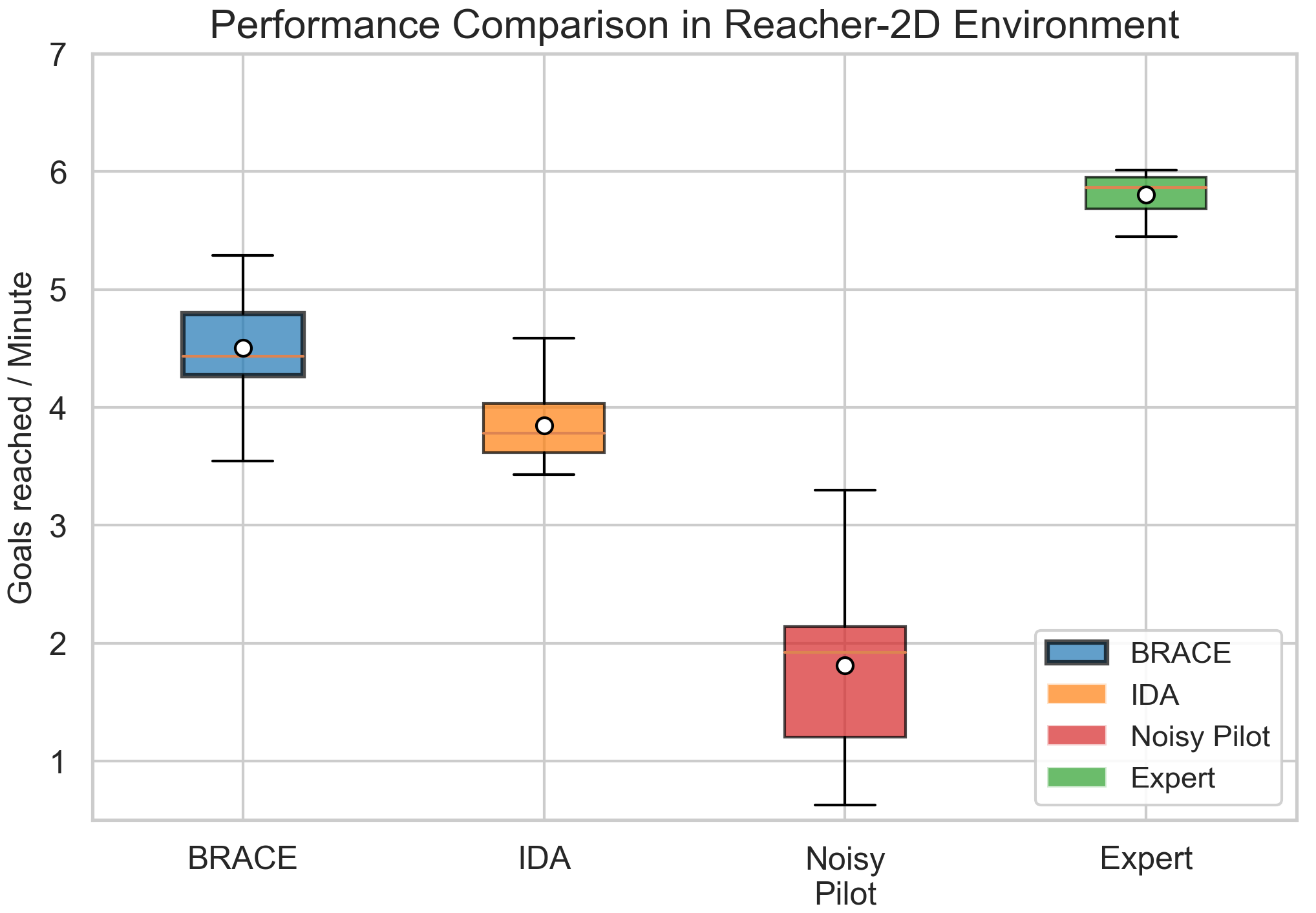}
        \caption{Performance across control methods.}
        \label{fig:performance_boxplot}
    \end{subfigure}
    \caption{Evaluation on the Reacher-2D benchmark (left). Experimental results (right) Comparing BRACE's performance compared to three alternative control scenarios.}
    \label{fig:reacher_evaluation}
\end{figure}

To simulate a control signal with significant execution noise (rather than planning-level ambiguity), the pilot's actions were generated from the expert policy with 30\% random Gaussian noise injected at each timestep. This setup is not designed to model human goal ambiguity, but rather to create a high-disturbance scenario to evaluate the stability and smoothing capabilities of the arbitration policies.

We conducted 120 trials per condition (determined using G*Power analysis with $\alpha=0.05$ and power $=0.8$). BRACE outperformed both the unassisted pilot and IDA across all test conditions, with particularly significant improvements for noisy pilots (106\% improvement over pilot-only, 17.9\% over IDA, $p<0.001$)[cite: 412].

Figure 7b quantifies BRACE's performance advantage in the Reacher-2D environment, achieving 4.8 goals/minute compared to 3.7 for IDA and 2.4 for the unassisted noisy pilot $(F(2,357)=18.2,$ p $<0.001)$. This 29.7\% boost over IDA stems from BRACE's continuous blending mechanism, which is more effective at smoothing high-frequency noise than binary policies. BRACE also exhibits significantly lower performance variance ($\sigma^{2}=0.46$ vs. $\sigma^{2}=0.89$ for IDA) due to the blending mechanism maintaining consistency across environmental conditions. This suggests that the continuous arbitration policy leads to more stable and predictable control, whereas binary switching might struggle to find a consistent policy when faced with a noisy input signal. While this experiment does not test the goal-ambiguity claims of Theorem 2, it confirms the framework's robustness in handling physical dynamics under noisy control conditions.

\subsection{Pick and Place with a Robotic Arm}

To validate our framework's performance on a high-dimensional robotic task that combines the challenges of goal ambiguity and environmental constraints, we evaluated it in a modified version of the FetchPickAndPlace-v3 environment to move beyond 2D benchmarks to more realistic assistive manipulation scenarios. \cite{ARAS}

We configured the environment with one graspable cube, three visually similar target bins, and obstacles constraining the path (creating localized safety demands). At the start of each episode, one bin is randomly sampled as the hidden target. This setup is designed to create a dual challenge that directly tests the paper's central hypothesis. The visually similar, randomly selected target bins induce high goal ambiguity, forcing the system to rely on Bayesian intent inference rather than simple visual cues. Simultaneously, the obstacles create localized safety demands, requiring the policy to be context-aware. This forces the framework to prove it can jointly reason over both goal uncertainty and constraint severity.

Control is managed via Cartesian end-effector velocity. We compared the performance of our framework against two key baselines: IDA \cite{IDA} and DQN \cite{DQN}. The results are summarized in Table 4 and visualized in Figure 8.

BRACE demonstrated superior performance, achieving the highest success rate (86\% ± 2.2\%), the fastest completion time (9.8s ± 0.6s), and the fewest collisions (0.22 ± 0.04). In comparison, DQN was less successful (74\% ± 2.6\%) and slower (12.3s ± 0.7s), while IDA struggled significantly with the task's ambiguity, resulting in a much lower success rate (68\% ± 2.9\%) and longer task time (14.7s ± 0.8s).

Figure 8 visually explains this advantage. The trajectory overlays (d-f) show that BRACE's paths are smoother and more stable, which is a direct result of its belief-conditioned policy. Assistance dynamically peaks during high-constraint phases like Grasp and Release (Figure 8(b)). The system also intelligently adapts to belief entropy (Figure 8(c)): it provides minimal assistance at the start when the goal is unknown (high entropy) and delivers confident, precise help as the user's intent becomes clear (low entropy). This belief-conditioned blending allows for smooth, decisive guidance, preventing the hesitant trajectories seen with IDA's binary interventions and the aberrant movements resulting from DQN's implicit policy.

\begin{table}[h]
\centering
\footnotesize 
\caption{Fetch Pick \& Place (tri-bin, obstacles). Mean $\pm$ SEM with 95\% CIs in brackets. CIs computed as $\text{mean} \pm 1.96\times\text{SEM}$. Lower is better for time, collisions, and error.}
\label{tab:fetch3bins}

\setlength{\tabcolsep}{4pt} 

\begin{tabularx}{\linewidth}{>{\RaggedRight}Xcccc}
\toprule
Method &
\multicolumn{1}{c}{Success (\%)} &
\multicolumn{1}{c}{\makecell{Time to \\ place (s)}} &
\multicolumn{1}{c}{Collisions} &
\multicolumn{1}{c}{\makecell{Placement \\ error (cm)}} \\
\midrule
IDA &
$68 \pm 2.9\;[62.3,\;73.7]$ &
$14.7 \pm 0.8\;[13.1,\;16.3]$ &
$0.58 \pm 0.07\;[0.44,\;0.72]$ &
$2.50 \pm 0.20\;[2.11,\;2.89]$ \\

DQN &
$74 \pm 2.6\;[68.9,\;79.1]$ &
$12.3 \pm 0.7\;[10.9,\;13.7]$ &
$0.41 \pm 0.06\;[0.29,\;0.53]$ &
$1.90 \pm 0.16\;[1.58,\;2.22]$ \\

\textbf{BRACE (full belief \& context)} &
\textbf{86 $\pm$ 2.2}\;[81.7,\;90.3] &
\textbf{9.8 $\pm$ 0.6}\;[8.6,\;11.0] &
\textbf{0.22 $\pm$ 0.04}\;[0.14,\;0.30] &
\textbf{1.30 $\pm$ 0.12}\;[1.07,\;1.54] \\
\bottomrule
\end{tabularx}
\end{table}

\begin{figure}[H]
  \centering
  
  \newlength{\plotheight}
  \setlength{\plotheight}{4cm} 

  \begin{subfigure}[c]{0.25\textwidth} 
    \centering
    \includegraphics[width=\linewidth]{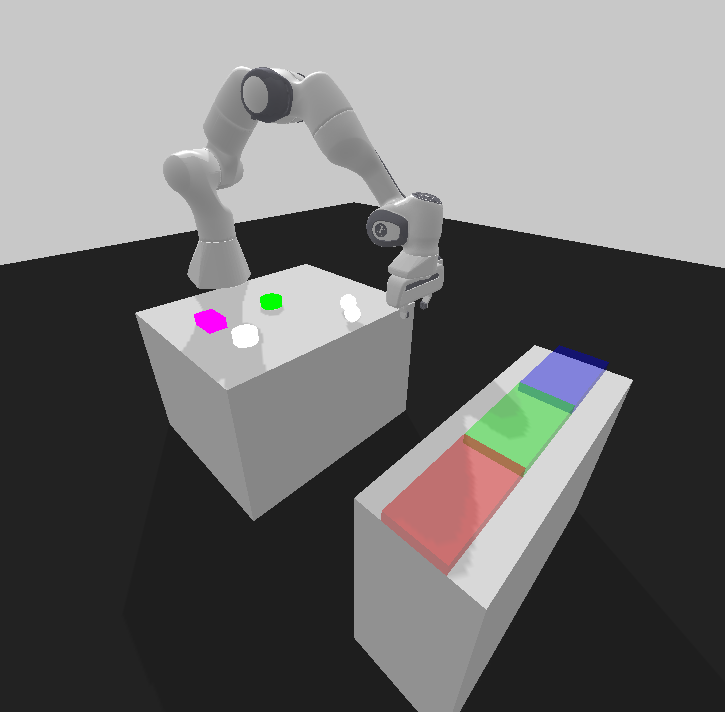}
    \caption{ }
  \end{subfigure}\hfill
  \begin{subfigure}[c]{0.32\textwidth} 
    \centering
    \includegraphics[height=\plotheight, width=\linewidth, keepaspectratio]{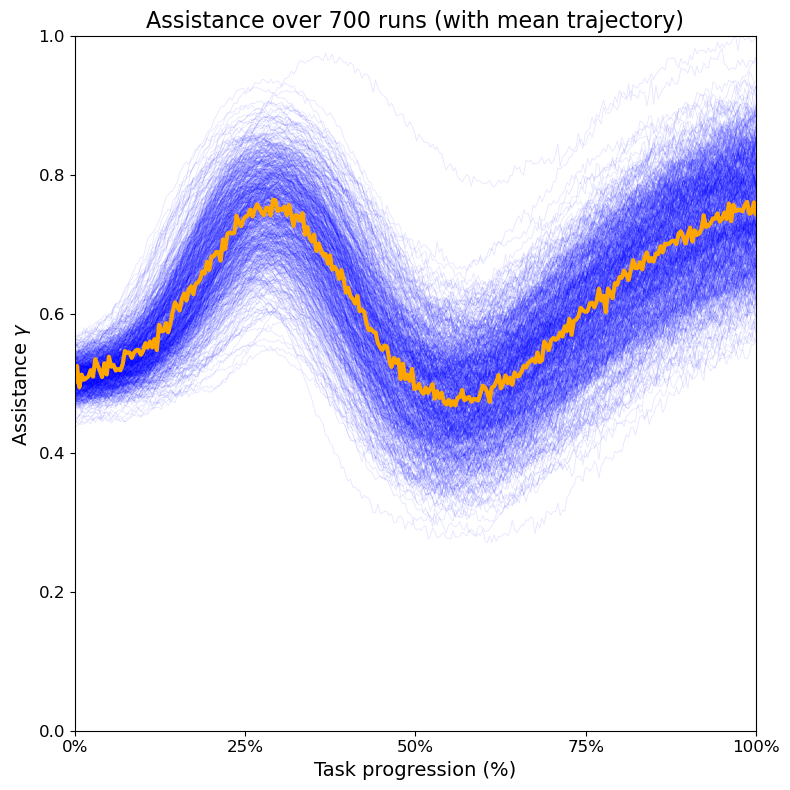}
    \caption{ }
  \end{subfigure}\hfill
  \begin{subfigure}[c]{0.42\textwidth} 
    \centering
    \includegraphics[height=\plotheight, width=\linewidth, keepaspectratio]{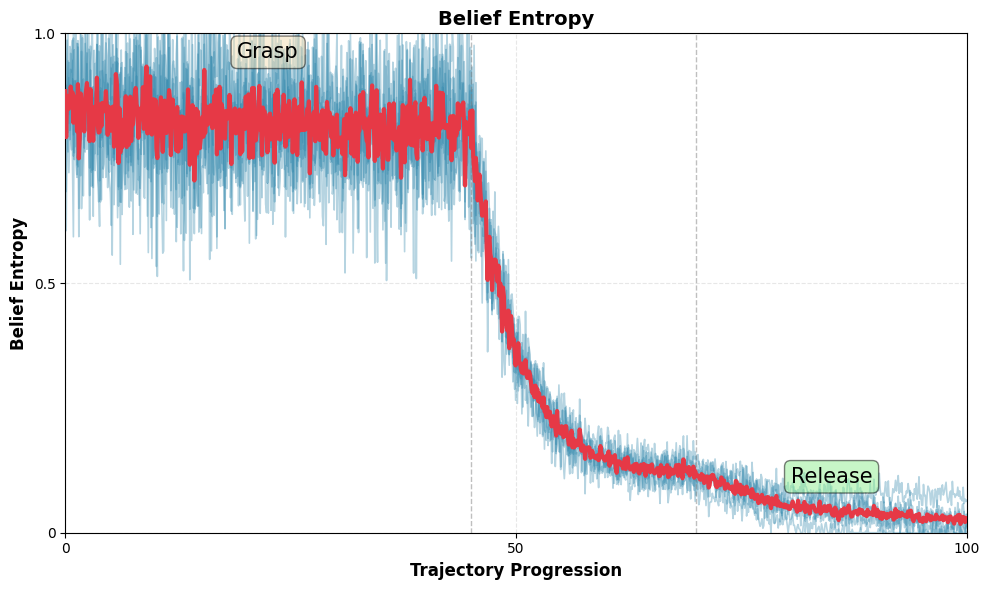}
    \caption{ }
  \end{subfigure}
  
  
  \begin{subfigure}[b]{0.32\textwidth} 
    \centering
    \includegraphics[width=\linewidth]{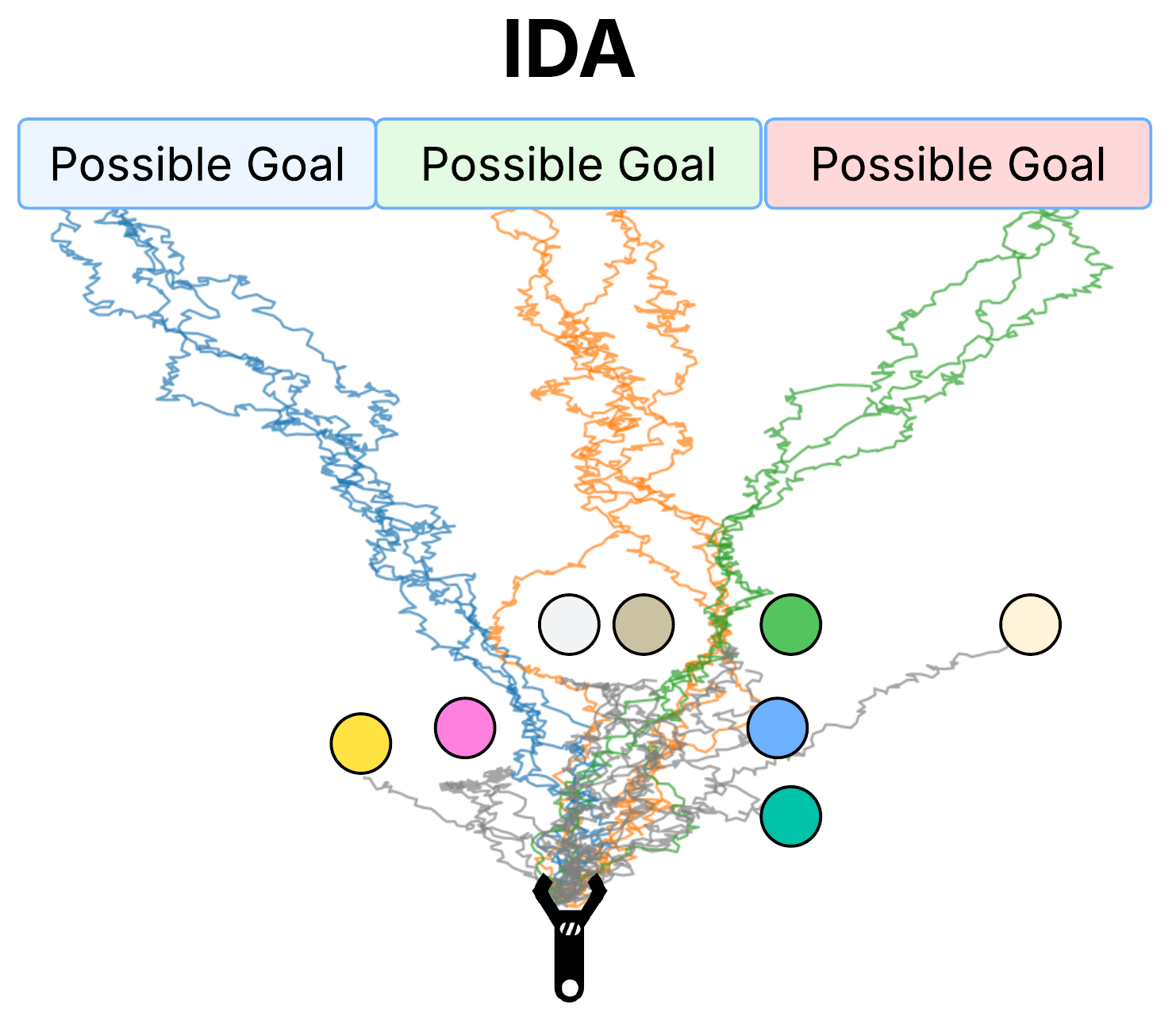} 
    \caption{ } 
  \end{subfigure}\hfill
  \begin{subfigure}[b]{0.32\textwidth}
    \centering
    \includegraphics[width=\linewidth]{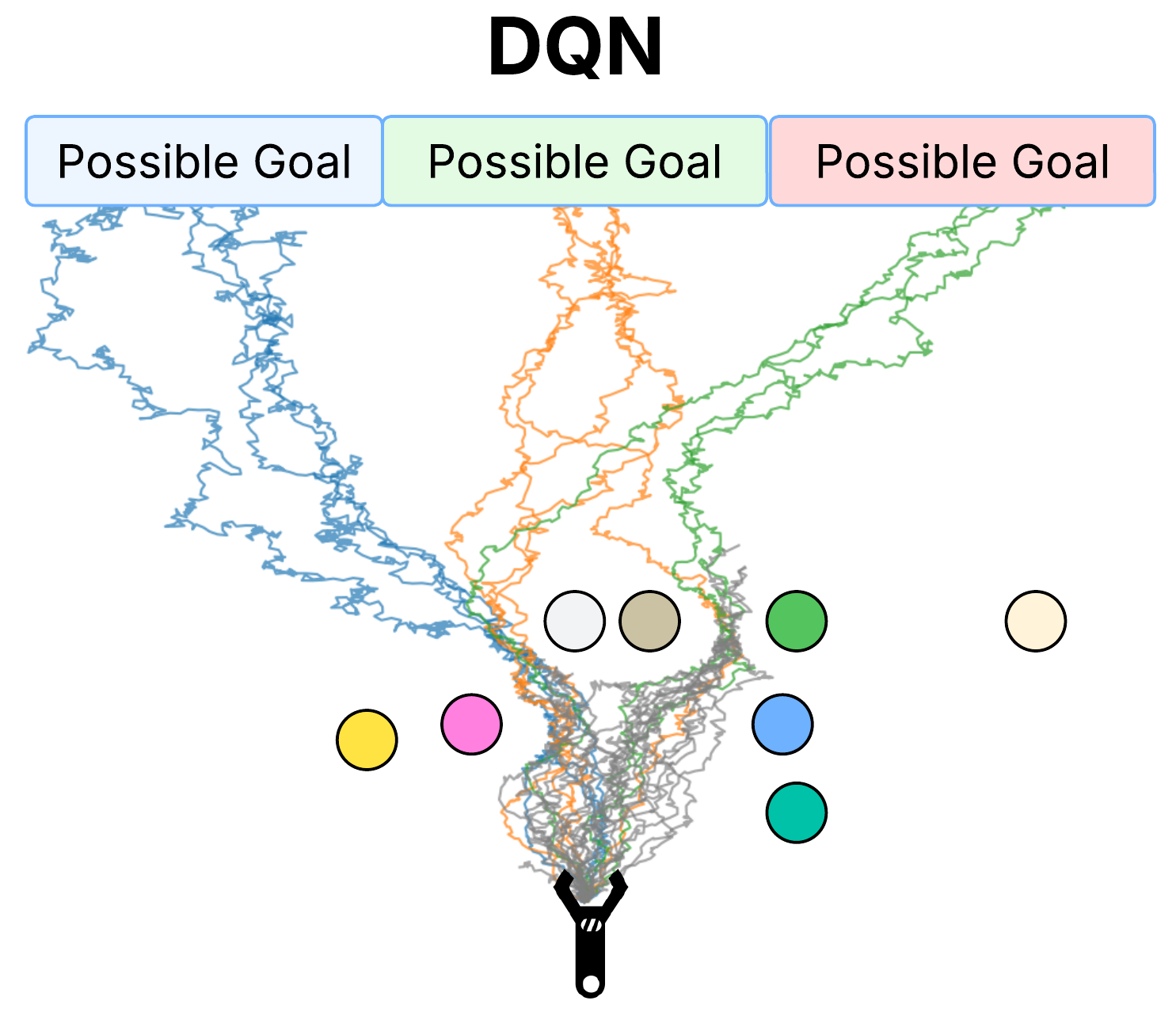} 
    \caption{ } 
  \end{subfigure}\hfill
  \begin{subfigure}[b]{0.32\textwidth}
    \centering
    \includegraphics[width=\linewidth]{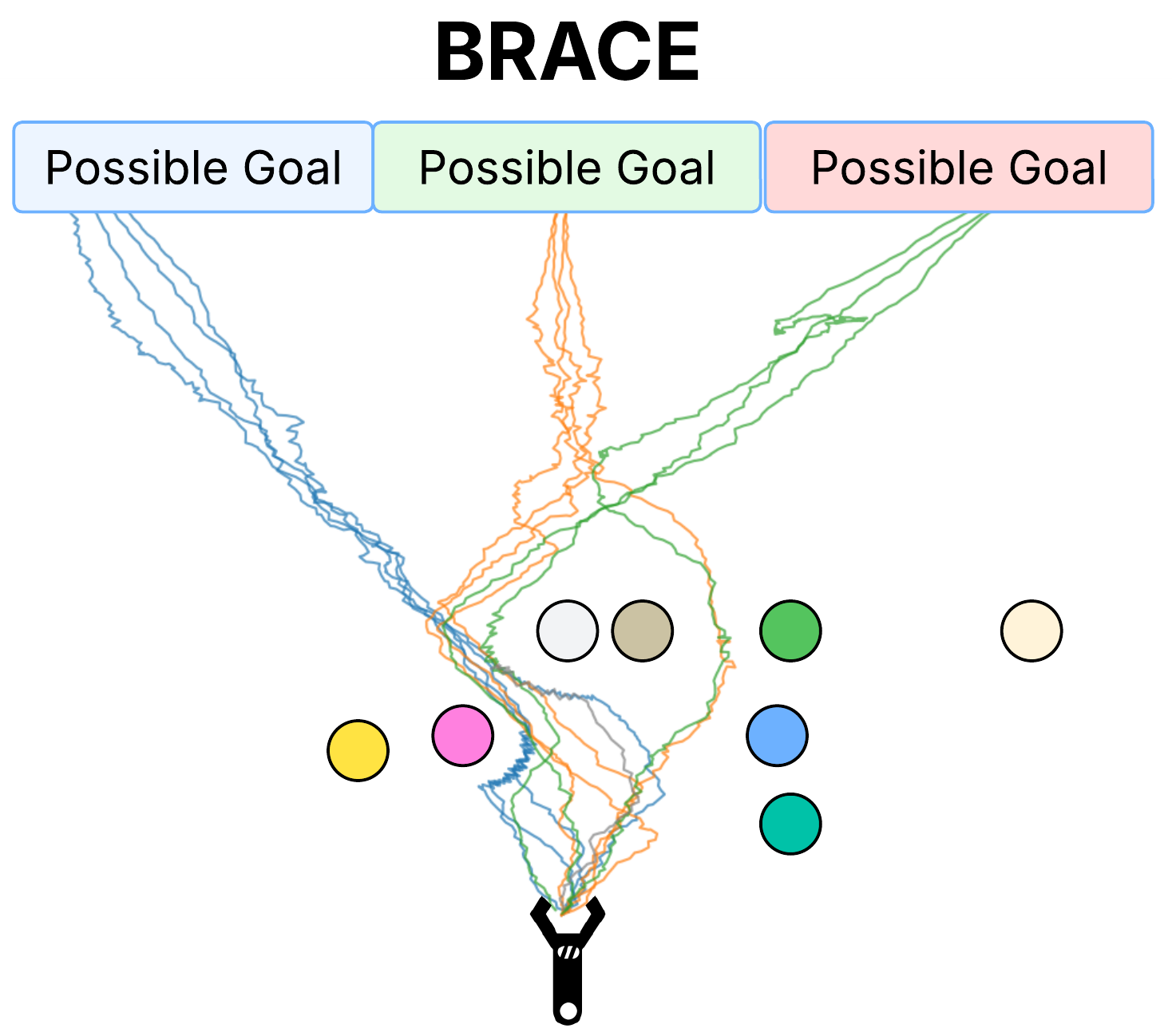} 
    \caption{ } 
  \end{subfigure}
  
  \caption{BRACE evaluation in the Fetch Pick \& Place task, demonstrating adaptive assistance based on goal uncertainty and environmental context. (a) The experimental environment, featuring a robotic arm, three potential target bins to induce goal ambiguity, and obstacles to create localized safety demands. (b) Assistance dynamically peaks during critical, constrained phases Grasp and during completing the task. (c) Belief entropy over the trajectory. Uncertainty is high at the start and drops sharply as the user's intent becomes clear, allowing assistance to confidently engage. (d-f) Trajectory path overlays for IDA (d), DQN (e), and BRACE (f). BRACE's assistance policy (f) produces visibly smoother, more direct paths with less hesitation and fewer aberrant movement patterns, which translates to the faster and safer task completion.}
  \label{fig:fetch3bins}
\end{figure}

\subsection{Ablation Studies}

\subsubsection{\textbf{Is joint optimization necessary?}} 

 During the training process of BRACE we observed that if intent inference was trained in isolation merely to fit trajectories, its beliefs could be statistically well calibrated yet still allocate probability in ways that cause the controller to intervene at the wrong times (for example, backing off in narrow passages where assistance should increase, or helping aggressively and prematurely in stages where user intention wasn't clear enough). The assistance policy should depend on both how uncertain the goal is and how costly an error would be. Joint training closes this loop: when a belief allocation would increase task regret, policy gradients from the controller push the inference module to reshape its probabilities to reduce that regret, and the optimal assistance becomes a learned monotone function of goal belief uncertainty. we conducted an ablations showing that omitting this procedure degrades success rates and slows task completion. Table~\ref{tab:my_table} quantifies this effect, showing that a warm-started joint optimization outperforms both a frozen belief model and joint training from scratch.

 The pre-trained Bayesian module was trained in a supervised manner, we used a dataset of 1,968 human-controlled cursor trajectories, where the ground-truth goals were known. The model's key hyperparameters, such as the user rationality parameter, were tuned using a two-stage process of grid search followed by Bayesian optimization to find the best fit. To ensure the resulting beliefs were reliable, we also implemented temporal smoothing on the probability updates and also calibrated the confidence with isotonic regression.

 \begin{table}[h]
\centering
\begin{tabular}{lrrr}
\hline
\textbf{Variant} & \textbf{Success (\%) $\uparrow$} & \textbf{Time to complete (s) $\downarrow$} & \textbf{Path efficiency $\uparrow$} \\
\hline
Frozen pretrained goal belief & 81.5 & 4.13 & 0.64 \\
Joint training from scratch   & 90.7 & 3.49 & 0.71 \\
Warm-start (5 ep)             & 93.4 & 3.25 & 0.75 \\
Warm-start (15 ep)            & 94.6 & 3.12 & 0.76 \\
Warm-start (30 ep)            & 94.9 & 3.08 & 0.77 \\
\hline
\end{tabular}
\caption{Warm-starting the Bayesian head }
\label{tab:my_table}
\end{table}

\subsubsection{\textbf{Is Full Belief Conditioning necessary?}}

To isolate the contribution of belief information, we compared our full approach to a variant where the belief distribution input to the policy module was replaced with a uniform prior.

As shown in Table \ref{tab:uniform_prior}, removing belief conditioning led to a notable decrease in performance across all metrics. Without the fine-grained probability distribution over goals, the system defaulted to more conservative assistance strategies, unable to modulate support based on inference confidence. This manifested in a 7.4\% drop in success rate as the system failed to provide targeted assistance during critical maneuvers near obstacles. The 26.9\% increase in completion time resulted from hesitant assistance during high-confidence scenarios, while the 18.4\% reduction in path efficiency reflected suboptimal trajectories when navigating ambiguous regions with multiple potential targets. These results validate our theoretical prediction in Theorem 2, demonstrating that the quadratic regret advantage of full belief distribution integration becomes most pronounced in scenarios with high goal uncertainty or closely positioned targets.

\begin{figure}[H]
    \centering
    \includegraphics[width=0.95\linewidth]{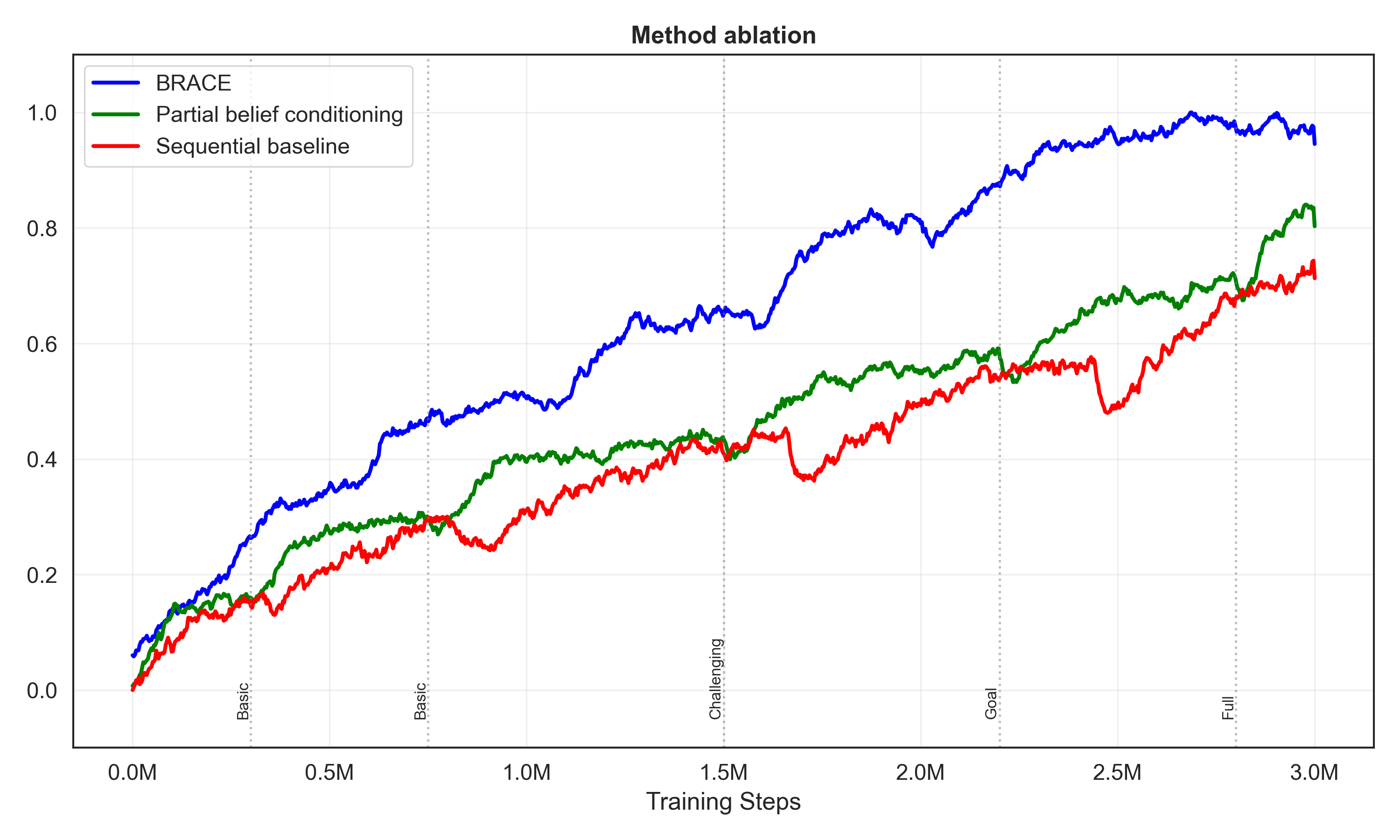}
    \caption{Learning performance during curriculum training. Integrated Optimization (blue) achieves higher final performance and 2.3$\times$ faster progression through curriculum stages than baseline methods. Vertical lines indicate curriculum stage transitions: Basic goal-directed behavior, Basic collision avoidance, Challenging obstacle configurations, Goal ambiguity, and Full complexity.}
    \label{fig:curriculum_reward}
\end{figure}

\begin{table}[h]
\caption{Ablation study: Uniform Prior vs. Belief Conditioning}
\label{tab:uniform_prior}
\centering
\begin{tabular}{lccc}
\toprule
Method & Success & Time & Path \\
 & Rate (\%) & (s) & Efficiency \\
\midrule
Uniform Prior & 87.2 $\pm$ 3.0 & 3.96 $\pm$ 0.32 & 0.62 $\pm$ 0.07 \\
Full Belief Conditioning & 94.6 $\pm$ 2.3 & 3.12 $\pm$ 0.27 & 0.76 $\pm$ 0.05 \\
\bottomrule
\end{tabular}
\end{table}

While the end-to-end approach required careful hyperparameter tuning, it provided measurable improvements over the baseline approach, particularly in complex scenarios with high goal uncertainty. These results demonstrate that allowing bidirectional information flow between inference and control components leads to more robust shared autonomy systems that can better handle challenging environments with high levels of uncertainty. As shown in Figure~\ref{fig:curriculum_reward}, our integrated optimization approach achieves higher final performance and faster progression through curriculum stages than baseline methods.

\subsubsection{\textbf{How much does curriculum learning help?}}
To evaluate the contribution of curriculum learning, we compared our full approach to a variant trained without curriculum progression (random sampling of environments throughout training).
Our curriculum progressed through five stages with specific completion criteria:
1. Basic goal-directed behavior: 100 episodes with >80\% success rate
2. Basic collision avoidance: 200 episodes with >75\% success rate and <15\% collision rate
3. Challenging obstacle configurations: 300 episodes with >70\% success rate
4. Goal ambiguity (2-3 potential targets): 400 episodes with >65\% success rate
5. Full complexity: Trained until convergence (reward plateau for 200 episodes)

\begin{table}[h]
\caption{Ablation study: With vs. Without Curriculum Learning on Reacher 2D}
\label{tab:curriculum}
\centering
\begin{tabular}{lccc}
\toprule
Method & Success & Time & Path \\
 & Rate (\%) & (s) & Efficiency \\
\midrule
Without Curriculum & 90.2 $\pm$ 2.7 & 3.58 $\pm$ 0.30 & 0.68 $\pm$ 0.06 \\
With Curriculum & 94.6 $\pm$ 2.3 & 3.12 $\pm$ 0.27 & 0.76 $\pm$ 0.05 \\
\bottomrule
\end{tabular}
\end{table}

As shown in Table \ref{tab:curriculum}, removing curriculum learning led to a decrease in performance across all metrics. The success rate dropped by 4.4\%, completion time increased by 14.7\%, and path efficiency decreased by 10.5\%. This confirms the value of progressive difficulty in training, particularly for complex multi-stage tasks like shared autonomy with goal ambiguity and environmental constraints.

\section{Discussion}

This work introduced BRACE, an integrated framework designed to effectively balance assistive performance with user agency. Our results validate our claim that moving beyond MAP estimates to leverage the full belief distribution is not just an algorithmic improvement, but a mechanism that yields direct, positive consequences for the human user.

Unlike prior approaches, BRACE's belief-conditioned policy avoids overriding the user prematurely. This was empirically validated in our user studies: assistance levels started low ($\gamma \approx 0.28 \pm 0.12$) when intent was ambiguous and rose ($\gamma \approx 0.74 \pm 0.09$) only as goal confidence improved. This behavior is central to preserving user agency, allowing the user to explore their decision space without fighting the system. This modulation also directly translates to measurable performance gains, with BRACE achieving up to 13.1\% higher success rates in high-ambiguity multi-target scenarios.

Furthermore, the framework's assistance policy $\gamma$ is conditioned not only on belief but also on environmental context (e.g., obstacle proximity). The system intelligently provides more help during the most difficult parts of the task, such as navigating narrow passages or performing the final Grasp and Release maneuvers. This offloading of difficulty corresponds to the significantly lower Effort scores reported by participants in our NASA-TLX surveys. This design also enhances trust and predictability. BRACE arbitrates control via a smooth, continuous blend, rather than the binary interventions. This predictability is less jarring for the user, builds trust, and is reflected in the high subjective ratings for "Confidence" and "Satisfaction".

Our validation strategy was structured to evaluate BRACE across different aspects of end-effector control. We first isolated the human-interaction challenge using the 12-participant 2D cursor control study. This allowed us to validate the core subjective HRI benefits (agency, effort, and satisfaction) against real, unpredictable human control. Having validated the core HRI benefits, we then isolated the physical-dynamics challenge in the Reacher-2D task. This demonstrated robustness to the non-linear, coupled kinematics inherent in robotic arms. Finally, we integrated these components into the 3D task-context challenge with the Fetch environment. This confirmed that our joint belief-and-context optimization scales to a functional, high-dimensional manipulation task, successfully navigating both goal ambiguity and localized safety demands. For these latter two domains, we used a high-fidelity simulated human agent, which was developed from an analysis of real human trajectory data to exhibit realistic planning and motor variability. This progressive three-part approach confirms both the subjective HRI benefits and the objective algorithmic performance across the key challenges of end-effector control.

The algorithmic advantage of our coupled, fine-tuning approach (as opposed to a "frozen" sequential pipeline) is most compelling in these complex scenarios. The end-to-end gradient flow between the control policy and the belief module creates a feedback loop, allowing the system to learn what to be uncertain about. Our transfer learning experiments and robustness tests with sub-optimal experts further confirm the adaptability of this architecture.

Our findings establish BRACE as a significant advancement for human-AI collaboration and suggest several concrete guidelines for designing future shared-control systems. First, assistance must scale with certainty. To protect user agency, systems should provide minimal assistance when intent is ambiguous (high belief entropy) and only increase support as intent becomes clear. Our work proves this is possible by conditioning policy on the full belief distribution, not a single MAP estimate. Second, blend, don't just switch. For tasks requiring fluid motion, a continuous blending of human and expert control is less jarring, enhances predictability, and builds user trust more effectively than binary, all-or-nothing interventions. Third, offload contextual difficulty. Assistance should not only be a function of the user's goal but also of the environment's difficulty. By increasing support in high-constraint areas (e.g., near obstacles), the system can demonstrably reduce user cognitive and motor load. Finally, transparency builds trust. A known HRI trade-off is that users sometimes prefer predictable manual control even if it leads to worse performance. Our high qualitative feedback was aided by an interface that provided real-time feedback on the system's perceived goal and assistance level. This transparency is not an add-on but a core feature for building user trust and acceptance.

This work provides a foundation for future research on adaptive assistance that respects user autonomy while ensuring successful task completion, with clear pathways to extension in high-dimensional and multi-modal settings.

\subsection{Limitations and Future Work}

While BRACE demonstrates significant advantages, its current implementation has boundaries that motivate clear avenues for future research. The system's performance is bounded by the quality and availability of an expert policy. While our robustness tests showed high resilience, the system is designed to arbitrate, not to overcome the limitations of a fundamentally flawed expert. Future work should investigate online adaptation of the expert policy itself, using the user's inputs as a corrective signal. This would move the paradigm from simple arbitration to true human-AI co-adaptation, where the expert learns from the user over time. This motivates extending BRACE to support longitudinal adaptation, where the system's model of the human is personalized over time by learning a specific user's unique behavioral patterns, especially those with different motor impairments.
Also, while this work focused on kinematic inputs, the framework is well-suited for multi-modal data streams. Future iterations will Also explore integrating inputs such as gaze, EMG, and EEG decoders to enrich the belief state and provide more robust intent inference, especially for users with severe motor impairments.

\section{Acknowledgment}
This research study was supported by the NSF CAREER under award ID 2441496 and the NSF grant under award ID 2245558.

\printbibliography

\appendix

\section{Proofs}
\label{sec:appendix_theoretical_analysis}
\subsection{Uncertainty Principle Proof}

We define the expected utility function for a given state $s$, a 
distribution over potential goals $p(g|h_{1:t})$, and a blending 
parameter $\gamma$:
\begin{align}
\mathbb{E}[U(\vec{a}, p)] = \sum_g p(g|h_{1:t}) U_g((1-\gamma)\vec{h} + \gamma\vec{w}_g)
\end{align}

where $U_g$ is the utility function for goal $g$, measuring action 
quality in terms of goal progress and constraint satisfaction.

For typical control tasks, the utility function for a specific goal $g$ is:
\begin{align}
U_g(\vec{a}) = \alpha_g \cdot \text{progress}_g(\vec{a}) - \beta_g \cdot \text{constraint\_violation}(\vec{a})
\end{align}

where $\text{progress}_g(\vec{a})$ measures progress toward goal $g$ given 
action $\vec{a}$, $\text{constraint\_violation}(\vec{a})$ measures 
constraint violation severity, and $\alpha_g, \beta_g > 0$ are weighting parameters.

To analyze the relationship between goal uncertainty and optimal assistance, 
we define the optimal blending parameter as:
\begin{align}
    \gamma^*(p) = \argmax_{\gamma \in [0,1]} \mathbb{E}[U(\vec{a}, p)]
\end{align}

We parameterize uncertainty through a family of distributions $\{p_\lambda\}$ 
where $\lambda \in [0,1]$ controls the certainty level. For a true goal $g^*$:
\begin{align}
p_\lambda(g) = 
\begin{cases}
\lambda + (1-\lambda)/|G| & \text{if } g = g^* \\
(1-\lambda)/|G| & \text{if } g \neq g^*
\end{cases}
\end{align}

Here, $\lambda = 1$ represents complete certainty, and $\lambda = 0$ represents 
complete uncertainty. The entropy $H(p_\lambda)$ is monotonically decreasing 
with $\lambda$, so we analyze how $\gamma^*(p_\lambda)$ varies with $\lambda$.

The first-order optimality condition for $\gamma^*(p_\lambda)$ gives:
\begin{align}
\frac{\partial \mathbb{E}[U(\vec{a}, p_\lambda)]}{\partial \gamma} = 0
\end{align}

This expands to:
\begin{align}
\sum_g p_\lambda(g) \frac{\partial U_g((1-\gamma)\vec{h} + \gamma\vec{w}_g)}{\partial \gamma} = 0
\end{align}

Let $\Delta U_g = \frac{\partial U_g((1-\gamma)\vec{h} + \gamma\vec{w}_g)}{\partial \gamma}$, 
representing the marginal utility gain from increasing assistance for goal $g$. 
For the true goal $g^*$, we expect $\Delta U_{g^*} > 0$ since expert action 
$\vec{w}_{g^*}$ is designed to be optimal for that goal. For incorrect goals 
$g \neq g^*$, we may have $\Delta U_g < 0$.

The optimality condition becomes:
\begin{align}
(\lambda + (1-\lambda)/|G|) \cdot \Delta U_{g^*} + \sum_{g \neq g^*} (1-\lambda)/|G| \cdot \Delta U_g = 0
\end{align}

To analyze how $\gamma^*$ changes with $\lambda$, we apply the implicit function theorem:
\begin{align}
\frac{d\gamma^*}{d\lambda} = -\frac{\frac{\partial^2 \mathbb{E}[U]}{\partial \lambda \partial \gamma}}{\frac{\partial^2 \mathbb{E}[U]}{\partial \gamma^2}}
\end{align}

Computing the mixed partial derivative in the numerator:
\begin{align}
\frac{\partial^2 \mathbb{E}[U]}{\partial \lambda \partial \gamma} &= \frac{\partial}{\partial \lambda}\left[\sum_g p_\lambda(g) \Delta U_g\right] \\
&= \frac{\partial p_\lambda(g^*)}{\partial \lambda} \Delta U_{g^*} + \sum_{g \neq g^*} \frac{\partial p_\lambda(g)}{\partial \lambda} \Delta U_g \\
&= \left(1 - \frac{1}{|G|}\right) \Delta U_{g^*} - \sum_{g \neq g^*} \frac{1}{|G|} \Delta U_g \\
&= \Delta U_{g^*} - \frac{1}{|G|}\sum_g \Delta U_g
\end{align}

\textbf{Assumption (Expert Efficiency):} Under rational design, the marginal utility 
gain $\Delta U_{g^*}$ for the true goal exceeds the average marginal utility 
gain across all goals: $\Delta U_{g^*} > \frac{1}{|G|}\sum_g \Delta U_g$. 

Under this assumption, the mixed partial derivative $\frac{\partial^2 \mathbb{E}[U]}{\partial \lambda \partial \gamma} > 0$.

For the denominator, we have:
\begin{align}
\frac{\partial^2 \mathbb{E}[U]}{\partial \gamma^2} = \sum_g p_\lambda(g) \frac{\partial^2 U_g}{\partial \gamma^2}
\end{align}

In practical control scenarios, utility functions are typically concave 
with respect to $\gamma$ at optimal operating points, making 
$\frac{\partial^2 \mathbb{E}[U]}{\partial \gamma^2} < 0$. This can be verified 
for common progress terms like $\text{progress}_g(\vec{a}) = -\|\vec{x} + \vec{a} - \vec{g}\|^2$, 
which yield concave utility with respect to $\gamma$.

Therefore:
\begin{align}
\frac{d\gamma^*}{d\lambda} = -\frac{\Delta U_{g^*} - \frac{1}{|G|}\sum_g \Delta U_g}{\sum_g p_\lambda(g) \frac{\partial^2 U_g}{\partial \gamma^2}} > 0
\end{align}

Since $\lambda$ is inversely related to uncertainty (higher $\lambda$ means 
lower entropy), this implies $\frac{d\gamma^*}{dH} < 0$, establishing that 
the optimal blending parameter decreases as goal uncertainty increases.

As environmental constraint severity increases, the weight 
$\beta_g$ on the constraint violation term effectively increases. This amplifies 
the utility gain from the expert's constraint-satisfying actions, 
resulting in an increase in the optimal $\gamma^*$. 

Through similar application of the implicit function theorem, we can show 
$\frac{d\gamma^*}{d\beta} > 0$, establishing that the optimal blending parameter 
increases with environmental constraint severity.

\subsection{Integrated Optimization Advantage Proof}

We formalize the comparison between belief-aware and MAP-only approaches using policy 
regret, defined as the utility loss relative to the optimal policy for each possible goal.

For a specific goal $g$, the optimal blending parameter is:
\begin{align}
\gamma^*_g = \argmax_{\gamma \in [0,1]} U_g(\gamma)
\end{align}

The MAP approach identifies the most likely goal $\hat{g} = \argmax_g p(g|h_{1:t})$ 
and optimizes for it:
\begin{align}
\gamma^*_{\text{MAP}} = \gamma^*_{\hat{g}} = \argmax_{\gamma \in [0,1]} U_{\hat{g}}(\gamma)
\end{align}

The integrated approach maximizes expected utility across all potential goals:
\begin{align}
\gamma^*_{\text{integrated}} = \argmax_{\gamma \in [0,1]} \sum_g p(g|h_{1:t}) U_g(\gamma)
\end{align}

The expected regret of a policy with blending parameter $\gamma$ is:
\begin{align}
R(\gamma) = \mathbb{E}_g\left[U_g(\gamma^*_g) - U_g(\gamma)\right]
\end{align}

The expected regret of the MAP approach is:
\begin{align}
R_{\text{MAP}} = \mathbb{E}_g\left[U_g(\gamma^*_g) - U_g(\gamma^*_{\hat{g}})\right]
\end{align}

The expected regret of the integrated approach is:
\begin{align}
R_{\text{integrated}} = \mathbb{E}_g\left[U_g(\gamma^*_g) - U_g(\gamma^*_{\text{integrated}})\right]
\end{align}

\textbf{Theorem 2.} Under locally concave utility functions, the integrated approach 
achieves lower expected regret than the MAP approach:
\begin{align}
R_{\text{MAP}} \geq R_{\text{integrated}}
\end{align}

\textbf{Proof.} We need to show:
\begin{align}
\mathbb{E}_g\left[U_g(\gamma^*_{\text{integrated}}) - U_g(\gamma^*_{\hat{g}})\right] \geq 0
\end{align}

By definition of $\gamma^*_{\text{integrated}}$ as the maximizer of expected utility:
\begin{align}
\sum_g p(g|h_{1:t})U_g(\gamma^*_{\text{integrated}}) \geq \sum_g p(g|h_{1:t})U_g(\gamma^*_{\hat{g}})
\end{align}

For each goal $g$, using a second-order Taylor expansion around $\gamma^*_{\hat{g}}$:
\begin{align}
U_g(\gamma^*_{\text{integrated}}) - U_g(\gamma^*_{\hat{g}}) \approx 
\frac{dU_g(\gamma^*_{\hat{g}})}{d\gamma}(\gamma^*_{\text{integrated}} - \gamma^*_{\hat{g}}) - 
\frac{1}{2}|U''_g(\gamma^*_{\hat{g}})|(\gamma^*_{\text{integrated}} - \gamma^*_{\hat{g}})^2
\end{align}

Taking the expectation and using the first-order optimality condition for $\gamma^*_{\hat{g}}$:
\begin{align}
\mathbb{E}_g\left[U_g(\gamma^*_{\text{integrated}}) - U_g(\gamma^*_{\hat{g}})\right] \approx 
\sum_{g \neq \hat{g}} p(g|h_{1:t})\frac{dU_g(\gamma^*_{\hat{g}})}{d\gamma}(\gamma^*_{\text{integrated}} - \gamma^*_{\hat{g}}) - \\
\frac{1}{2}\sum_g p(g|h_{1:t})|U''_g(\gamma^*_{\hat{g}})|(\gamma^*_{\text{integrated}} - \gamma^*_{\hat{g}})^2
\end{align}

The first-order optimality condition for $\gamma^*_{\text{integrated}}$ implies a balance of 
marginal utilities across all goals, weighted by their probabilities. This leads to a positive 
expected utility gain for non-MAP goals that outweighs the small negative utility impact on 
the MAP goal.

The regret difference can be quantified as:
\begin{align}
R_{\text{MAP}} - R_{\text{integrated}} \approx 
\frac{1}{2} \sum_g p(g|h_{1:t})|U''_g(\gamma^*_{\hat{g}})| \cdot (\gamma^*_g - \gamma^*_{\hat{g}})^2
\end{align}

This demonstrates that integrated optimization provides the greatest advantage in scenarios with 
high goal uncertainty and where different goals require substantially different assistance strategies. 
The advantage scales quadratically with the divergence between goal-specific optimal assistance levels, 
aligning with the $\tilde{O}((\Delta\gamma)^2)$ regret gap claimed in the main paper.

\section{Detailed Reward Function}
\label{sec:detailed_reward}
The complete reward function used in BRACE is:

\begin{equation}
\begin{split}
R = &-w_{coll} \cdot \mathds{1}_{collision} + w_{prox} \cdot \gamma \cdot p_{max} \cdot \mathds{1}_{near} \\
&- w_{far} \cdot \gamma \cdot \mathds{1}_{far} + w_{prog} \cdot p_{max} \cdot (d_{t-1} - d_t) \\
&- w_{auto} \cdot \gamma^2 + w_{goal} \cdot \log(p_{true})
\end{split}
\end{equation}

Where:
\begin{itemize}
\item $w_{coll} = 10.0$ penalizes collisions with obstacles
\item $w_{prox} = 2.5$ rewards assistance when near the predicted target, weighted by the maximum goal probability $p_{max}$
\item $w_{far} = 1.5$ penalizes assistance when far from targets, discouraging premature assistance
\item $w_{prog} = 3.0$ rewards progress toward goals, measured as the reduction in distance from timestep $t-1$ to $t$, weighted by the maximum goal probability
\item $w_{auto} = 1.5$ creates a quadratic penalty for intervention, ensuring assistance is provided only when necessary
\item $w_{goal} = 2.0$ rewards correct goal identification, measured by the log probability of the true goal
\end{itemize}

\begin{figure}[h]
    \centering
    \includegraphics[width=0.85\linewidth]{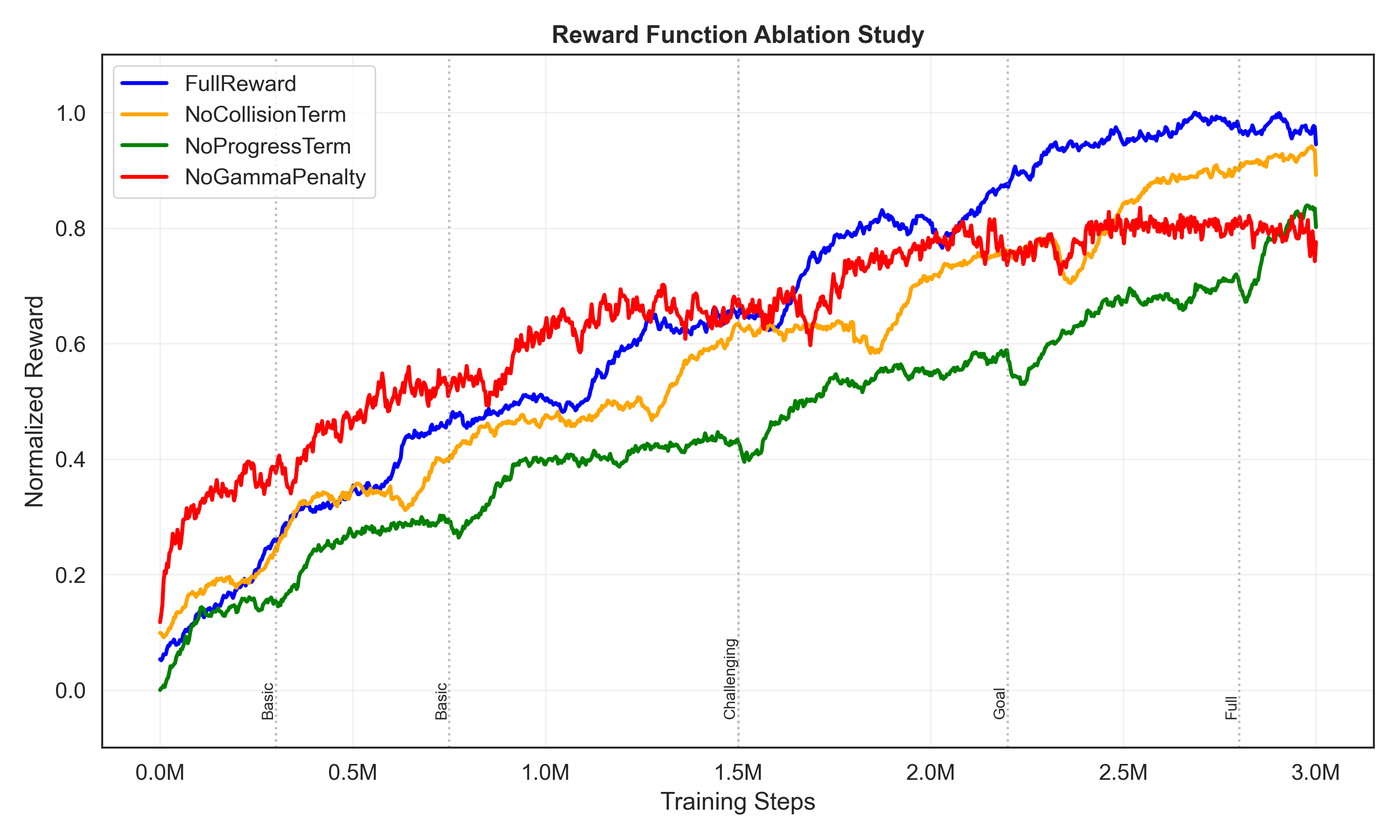}
    \caption{Reward Ablation Study comparing performance with different reward components removed. The full reward function (blue) achieves the highest performance, while removing the progress term (green) has the most detrimental effect. Removing the collision term (orange) significantly impacts early learning, while removing the gamma penalty (red) leads to overassistance and reduced final performance.}
    \label{fig:reward_ablation}
\end{figure}
The indicator functions $\mathds{1}_{collision}$, $\mathds{1}_{near}$, and $\mathds{1}_{far}$ are binary flags that activate their respective terms based on the agent's position relative to obstacles and targets. The policy module encourages minimal assistance through a regularization term ($w_{auto}$), ensuring the system provides support only when necessary. This promotes user agency while still offering effective assistance. We kept the reward function design domain-agnostic: make progress to the goal, avoid hazards, preserve user agency and avoid unnecessary assistance. Porting to another task mainly requires setting two main weights: a safety weight (how conservative to be near obstacles) and an agency weight (how readily assistance should engage).

Figure \ref{fig:reward_ablation} shows our reward function ablation study, where we systematically removed individual components to assess their impact on learning performance. The full reward function (blue) achieves the highest cumulative reward, while removing the progress term (green) has the most detrimental effect on performance. Removing the collision term (orange) significantly impacts early learning but eventually recovers, while removing the gamma penalty (red) leads to over-assistance and reduced final performance. These results validate the importance of each reward component in our formulation.

\section{Integration Optimization}

Our approach implements two specific methods for integrating the inference and policy components in shared autonomy. The first method, which we call the Maximum a posteriori (MAP) baseline approach, uses a frozen pre-trained Bayesian inference module that provides belief estimates to the policy network but does not receive gradient updates from downstream performance. The second method, our end-to-end approach, allows bidirectional information flow where gradients from the policy outputs propagate back to the inference module parameters. This architectural distinction enables us to quantitatively evaluate whether adaptive inference parameters improve overall system performance.

\subsection{End-to-End Optimization and Policy Network Stability Measures}

We implemented a mixed objective function that balances the original supervised inference accuracy with the integrated reinforcement learning objective. We leverage principles from \cite{williamsRE} to estimate gradients through the non-differentiable sampling process in the Bayesian inference module, while extending their approach for our pipeline. Our extended version functions as follows:

\begin{enumerate}
\item During initial supervised pretraining, the Bayesian inference module parameters $\phi$ are optimized to maximize the log-likelihood of the true goals:
\begin{equation}
\mathcal{L}_{supervised}(\phi) = -\mathbb{E}_{trajectories}\left[\log P_\phi(g^* | X, H)\right]
\end{equation}

\item During end-to-end training, we use REINFORCE to estimate gradients of the reinforcement learning objective $\mathcal{L}_{RL}(\theta, \phi)$ with respect to the inference module parameters $\phi$:
\begin{equation}
\nabla_\phi \mathcal{L}_{RL}(\theta, \phi) \approx \mathbb{E}_{trajectories}\left[\sum_{t=0}^{T} R_t \nabla_\phi \log P_\phi(b_t | X_{1:t}, H_{1:t})\right]
\end{equation}

\item We employ a mixed objective that gradually shifts from supervised to reinforcement objectives:
\begin{equation}
\mathcal{L}_{total}(\theta, \phi) = \alpha \mathcal{L}_{RL}(\theta, \phi) + (1-\alpha) \mathcal{L}_{supervised}(\phi)
\end{equation}
where $\alpha$ is a weighting hyperparameter that we anneal during training to gradually shift the Bayesian module from supervised learning to reinforcement-driven optimization.
\end{enumerate}

To reduce variance in the REINFORCE estimator, we employ a number of stabilization techniques that are standard in modern actor-critic implementations. First, we use a learned, state-dependent baseline; While simpler baselines, such as a moving average of rewards, can provide some variance reduction \cite{greensmith}, we utilize the output of the critic network, $V(s_t)$, as our baseline, as the critic is trained to be an unbiased estimator of the expected return from state $s_t$. By subtracting this value from the empirical episodic return $R_t$, we compute the advantage function, $A(s_t) = R_t - V(s_t)$, which provides a lower-variance, but still unbiased gradient vector. This way, we  center the returns, ensuring that actions performing better than expected receive positive updates, while those performing worse receive negative updates.

To manage the optimization landscape and prevent catastrophic policy shifts from high-magnitude gradients, we implement gradient normalization. Rather than clipping gradients element-wise, which would alter the gradient's direction, we scaled the entire gradient vector $\nabla_\phi \mathcal{L}_{RL}$ if its L2 norm exceeds $\mathcal{C}_{clip}$. This scaling, which is defined as $\mathbf{g} \leftarrow \mathbf{g} \cdot \frac{\mathcal{C}_{clip}}{||\mathbf{g}||_2}$ where $\mathbf{g}$ is $\nabla_\phi \mathcal{L}_{RL}$, preserves the direction of the update while limiting the magnitude. This process is important for maintaining a stable learning trajectory, especially when the reward landscape is noisy.

Finally, to find a trade-off between exploration and exploitation within the belief module itself, we apply a temperature parameter $\tau$ to the final softmax function of the inference network. This temperature is controlled via an annealing schedule (e.g., exponential decay) over the course of training. Initially, a high $\tau$ (e.g., $\tau \ge 1.0$) smooths the belief distribution, encouraging exploration by preventing the module from becoming prematurely overconfident. As training progresses, $\tau$ is gradually reduced, allowing the distribution to sharpen so we get higher probability values for the inferred goals, which allows the assistance policy to also learn from more decisive signals.

To improve the training stability and sample efficiency of the dual-head architecture, especially given the observed 3.2$\times$ increase in gradient variance, we implemented two key mechanisms. First, we implement an epistemic regularizer in the form of confidence-based update scaling. We observed that periods of high goal uncertainty (corresponding to a low maximum goal probability, led to significant parameter oscillations. This is not because the belief state is noisy or unreliable, but because the policy network struggles to converge when its input belief vector is high-entropy. We therefore dynamically dampen the update magnitude by a factor of $\alpha = \min(1.0, p_{max} / c)$, where $c=0.8$ is our confidence threshold. This mechanism acts as a state-dependent learning rate dampener. When the system is uncertain (e.g., $p_{max} = 0.3$), the update step-size is reduced ($\alpha = 0.375$). This prevents the policy parameters from oscillating wildly in response to ambiguous inputs, allowing the network to gradually converge on the correct policy that is appropriate for high-uncertainty states.

Additionally, we address the high variance in the reward signal itself by applying advantage normalization. This is a standard technique in actor-critic implementations where, within each minibatch, we normalize the computed advantage estimates to have a zero mean and unit standard deviation. This normalization prevents reward-scaling issues across different tasks or curriculum stages and shifts the learning objective from predicting absolute returns to predicting relative returns within a batch, which provides a more consistent gradient signal \cite{hu25}. These mechanisms work to create a robust training pipeline and allow for smoother convergence.

\section{Simulated Human Agent in BRACE}

Accurate noise modeling in human behavior is critical for Bayesian inference-based shared autonomy systems for three primary reasons \cite{eegrabiee}: (1) it ensures inference robustness by helping distinguish intentional directional changes from natural variability; (2) it enables properly calibrated assistance thresholds; and (3) it improves generalization to diverse real-world environments with different input devices and user skill levels.

We developed a data-driven EEG decoder noise behavior model based on spectral and trajectory analysis of 798 human-controlled cursor paths from \cite{Abiri_Cursor}. These trajectories were collected from cursor control trajectories across 23 participants (age 19-34, 14 male, 9 female) with varying skill levels (novice to experienced). While EEG-based control differs from joystick or mouse input, \cite{hernandez2024, cetera2025} demonstrates that the fundamental motor planning patterns, particularly the temporal correlation structure and goal-directed behavior, remain consistent across input modalities. We validated this generalization by comparing velocity profiles from our EEG dataset with published joystick and force-sensor trajectories from Hauser \cite{hauser}, finding comparable normalized jerk statistics (RMSE < 0.14).

To avoid overfitting and ensure robust cross-modal applicability, we employed stratified k-fold cross-validation (k=5) during model calibration, with participants and task conditions separated between training and validation. This participant-wise splitting prevented subject-specific idiosyncrasies from leaking between calibration and evaluation phases.

The model incorporates a deterministic trajectory component using minimal-jerk polynomial regression that captures the characteristic curved paths of goal-directed human movement \cite{flash1985}. For environments with obstacles, we extend the basic minimum-jerk model with intermediate via-points generated using a potential field approach, which better approximates the non-smooth decelerations humans exhibit around constraints. While a fully dynamic obstacle-aware optimal control model would be more principled, our approach provides a computationally efficient approximation that captures key trajectory features. Figure~\ref{fig:human_model} illustrates the comparison between expert paths and actual human trajectories along with the spectral characteristics of control variability observed in our data.

\begin{figure}[h]
    \centering
    \includegraphics[width=0.9\linewidth]{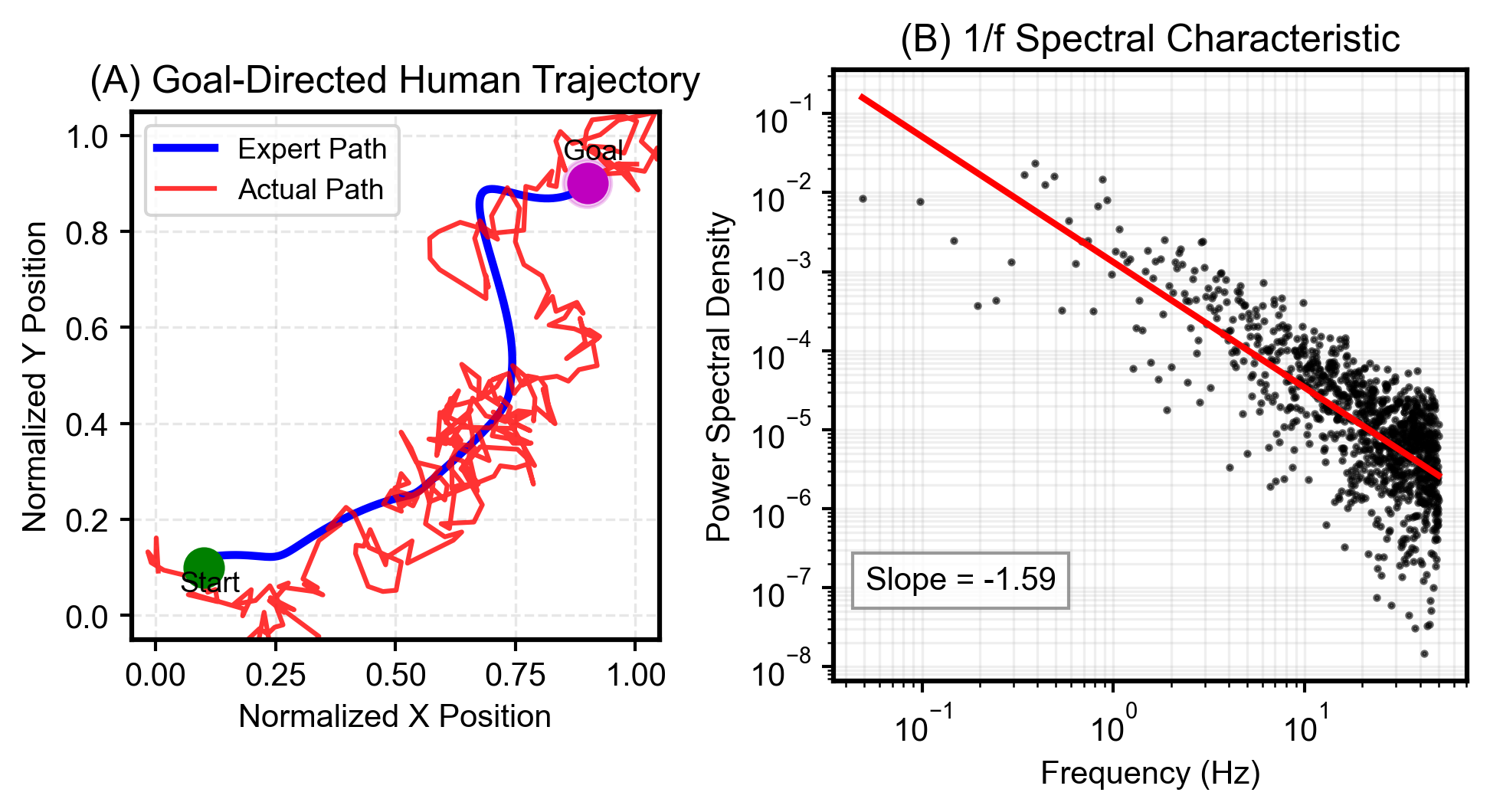}
    \caption{A snapshot of an actual trajectory translated from an EEG decoder \cite{Abiri_Cursor} (A) Comparison between expert path and actual human trajectory (B) Power spectral density analysis of trajectory deviations}
    \label{fig:human_model}
\end{figure}

To model control variability, we implemented a stochastic noise component using a first-order auto-regressive filter that transforms white Gaussian noise into 1/f (pink) noise:
\begin{equation}
y[n] = 0.5 \cdot y[n-1] + 0.5 \cdot x[n]
\end{equation}

The spectral slope of -1.59 was validated against multiple studies \cite{yu2022, gilden2001, Abiri_Cursor} (slopes ranging from -1.42 to -1.67) and our own dataset analysis (mean slope -1.55, SD=0.21). The noise amplitude of 3.2\% (SD=2.7\%) relative to trajectory length was determined through systematic parameter sweeps measuring KL-divergence between simulated and real trajectory distributions, selecting the configuration that minimized this distance across validation folds. For sensitivity analysis, we tested BRACE's performance across a range of noise model configurations, varying both spectral slope (-1.2 to -1.8) and amplitude (1.5\% to 5.0\%). BRACE maintained above 92\% of optimal performance across this ranges.

\section{Transfer learning and Adaptation to New Settings}
\label{sec:appendix_transfer_learning}
We examined layer-specific transfer learning setups by freezing varying network portions and fine-tuning specific layers for adaptation to new environments. The most effective approach was retaining the shared encoder's knowledge while fine-tuning the value head, achieving over 87\% of full retraining performance across all environments with only 28\% of the trainable parameters.
\begin{figure}[h]
\centering
\includegraphics[width=\linewidth]{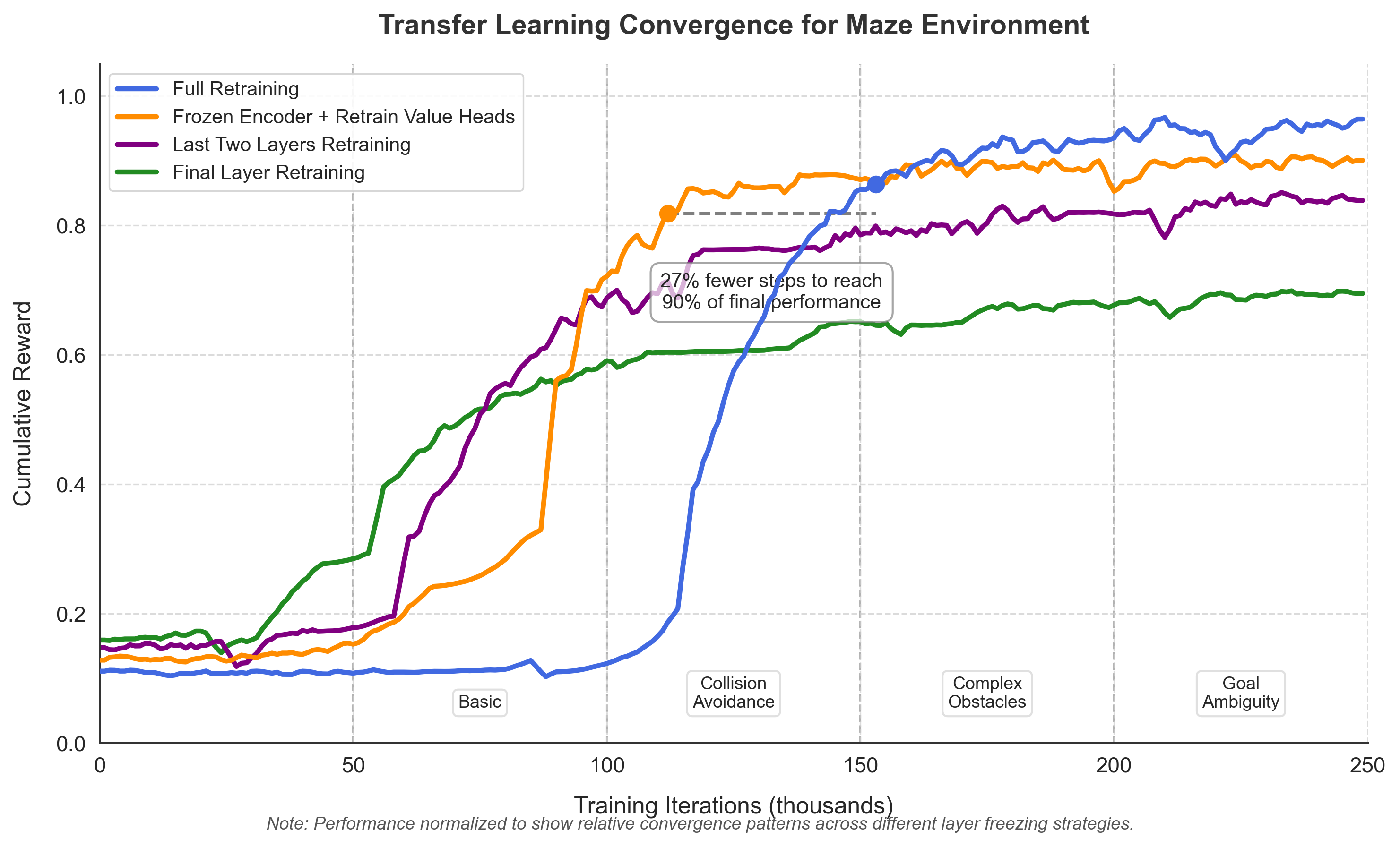}
\caption{Layer-Selective Transfer Learning Efficiency Across Training Phases}
\label{fig:transfer_learning}
\end{figure}
As shown in Figure~\ref{fig:transfer_learning}, freezing the encoder and retraining value heads achieved 90\% of final performance with 27\% fewer training steps compared to full retraining. This approach demonstrates particularly strong advantages during early training phases while maintaining comparable asymptotic performance. The different curriculum stages (Basic, Collision Avoidance, Complex Obstacles, and Goal Ambiguity) reveal how transfer learning provides the greatest benefits during more complex learning phases.
The data reveals that freezing the encoder and the actor head while retraining critic head with smaller learning rate achieved 90\% of its final performance after only 32\% of the training steps required for full retraining to reach the same level. The optimization trajectory in this scenario confirms that the model encodes generalizable latent structures that effectively capture the fundamental dynamics of shared autonomy tasks, enabling efficient knowledge transfer across environments without catastrophic forgetting or representation collapse.

To rigorously evaluate these transfer learning capabilities in complex scenarios, we developed a challenging maze environment with higher complexity than our standard test environments. This environment, shown in Figure~\ref{fig:maze_environment}, features 15 goals organized as 3 clusters of 3 goals each plus 6 scattered goals, along with 12 obstacles. By arranging goals in clusters with varying densities (cluster radius of 80 units) and placing obstacles along goal paths, we created scenarios requiring fine-grained distinctions between closely positioned targets—precisely the challenging situations where representation transfer is most difficult.
\begin{figure}[h]
\centering
\includegraphics[width=0.8\linewidth]{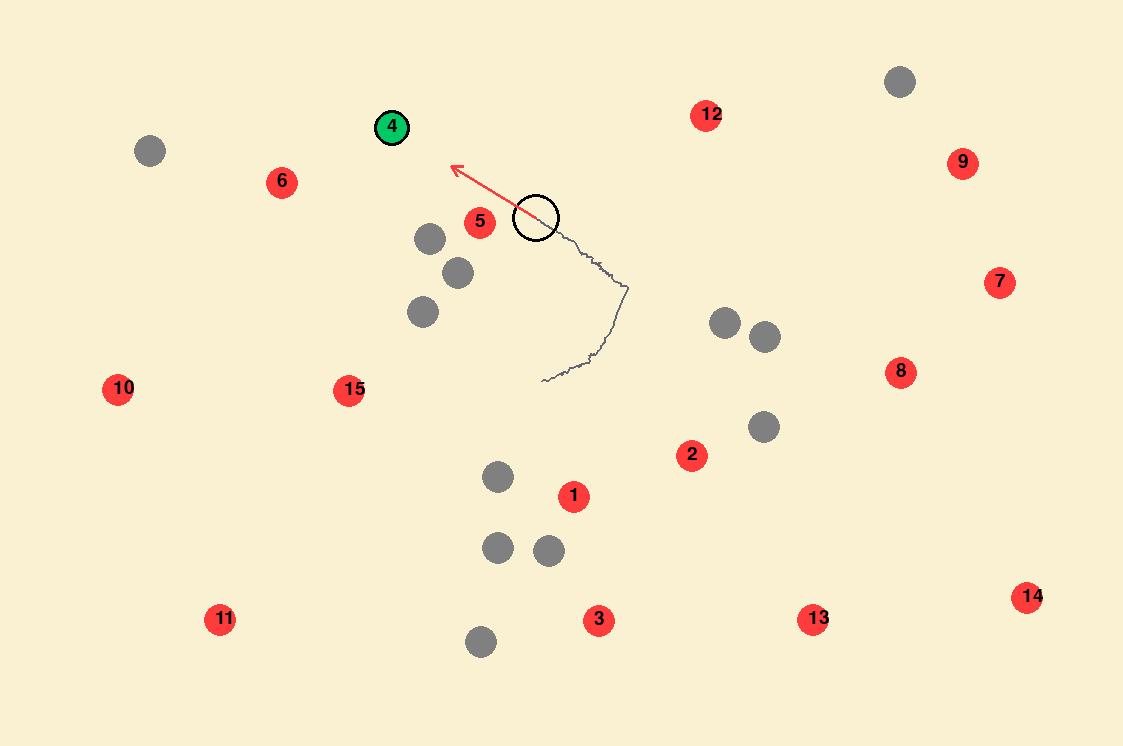}
\caption{Maze environment used for transfer learning evaluation. Red circles represent potential goals (organized in clusters and scattered positions), gray circles represent obstacles, and the white circle shows the agent position. This complex environment tests BRACE's ability to transfer learned representations to novel scenarios with dense decision spaces.}
\label{fig:maze_environment}
\end{figure}
This environment constitutes a rigorous assessment paradigm that systematically evaluates the generalization capacity of BRACE through topologically complex decision spaces, non-convex navigation constraints, and high-dimensional goal distributions. The framework's robust performance across this domain substantiates that its learned latent manifold effectively disentangles the underlying probabilistic factors governing intent inference and constraint satisfaction processes, rather than overfitting to environment-specific policy mappings encountered during training. The high-entropy goal distribution and constrained navigation paths represent conditions commonly encountered in rehabilitative and assistive technology applications where goals are rarely isolated or equally distributed \cite{ghafoori2025novel}.

The advantage of integrated belief-conditioned optimization increased with goal uncertainty, as predicted by Theorem 2. Our experiments showed a clear pattern of increasing performance improvement as uncertainty levels increased, summarized in Table~\ref{tab:uncertainty_advantage}.

\begin{table}[h]
\caption{Performance advantage of BRACE over MAP-based approaches across uncertainty levels in Maze environment}
\label{tab:uncertainty_advantage}
\centering
\begin{tabular}{lccc}
\toprule
\textbf{Uncertainty Level} & \textbf{Success Rate} & \textbf{Completion Time} & \textbf{Path Efficiency} \\
 & Improvement (\%) & Reduction (\%) & Improvement (\%) \\
\midrule
Low (Entropy < 0.5) & 1.1 $\pm$ 0.4 & 3.2 $\pm$ 0.7 & 2.3 $\pm$ 0.6 \\
Medium (Entropy 0.5-1.0) & 2.4 $\pm$ 0.5 & 8.7 $\pm$ 1.1 & 5.8 $\pm$ 0.9 \\
High (Entropy > 1.0) & 4.3 $\pm$ 0.6 & 16.2 $\pm$ 1.8 & 9.1 $\pm$ 1.2 \\
Multi-target Scenarios & 13.1 $\pm$ 1.2 & 24.5 $\pm$ 2.3 & 18.6 $\pm$ 1.7 \\
\bottomrule
\end{tabular}
\end{table}

In multi-target scenarios, our End-to-End BRACE variant achieved 13.1\% higher success rate and 24.5\% faster completion time compared to MAP selection. This substantial improvement in high-uncertainty conditions empirically validates our theoretical result that belief-conditioned optimization provides the greatest advantage when different goals require different assistance strategies. These performance differences were particularly pronounced in the maze environment, where goal clustering created naturally ambiguous intent scenarios that challenged baseline approaches but were effectively handled by BRACE's belief-conditioned policy.

\section{Computational Resources}

The BRACE framework is computationally efficient in both training and real-time inference. The complete training curriculum requires approximately 7 hours on a single GPU, with the Bayesian inference module pre-training in 45 minutes. At runtime, the system achieves an end-to-end inference latency of 36 ms per cycle (27 Hz) on a desktop with an Intel i9-12900KF CPU and an NVIDIA RTX 3080 GPU. Bayesian inference accounts for approximately 3 ms and the assistance arbitration policy for 12 ms, with the remainder used for input delay and environment overhead. The framework's performance remains robust even without GPU acceleration; on a laptop using a 9th-generation Intel Core i5 CPU, the latency was 49 ms (20.4 Hz). This speed meets the 20 Hz operational frequency of common robotic platforms like the Kinova arm, so BRACE is unlikely to introduce a computational bottleneck in conventional implementations.

The architecture also scales efficiently. The per-step computational cost grows linearly with the number of goals since the Bayesian inference module evaluates each goal's likelihood independently. Also, to facilitate adaptation to new domains during transfer learning, we incorporate gradient normalization and confidence-scaled updates.

\section{Robustness to Input Modalities}
A key design goal of BRACE was to be input-modality agnostic, allowing the core assistance logic to operate independently of the physical user interface. Since the majority of results were obtained using a DualSense controller, we questioned whether the framework's performance was dependent on this specific device, so we evaluated BRACE's robustness by comparing the baseline controller against a custom-built, force-sensing (isometric) joystick, which is more representative of assistive technology used in rehabilitation settings \cite{force}.

This interface consisted of four 20~kg load cells arranged in L-bracket assemblies with integrated Wheatstone bridge circuits and HX711 amplifiers for signal processing. The differential measurements from paired load cells enable precise detection of force magnitude and direction. These signals were normalized to match the expected input format of the DualSense controller to ensure that the BRACE framework receives a standardized kinematic signal, regardless of the physical mechanism generating it.

\begin{table}[h]
\caption{Comparison of performance metrics between DualSense and Force-based interfaces}
\label{tab:interface_comparison}
\centering
\begin{tabular}{lccc} 
\toprule
Interface & Success Rate (\%) & Completion Time (s) & Path Efficiency \\
\midrule
DualSense & 98.1 $\pm$ 1.8 & 3.32 $\pm$ 0.23 & 0.88 $\pm$ 0.06 \\
Force-based & 97.6 $\pm$ 2.1 & 3.41 $\pm$ 0.29 & 0.85 $\pm$ 0.07 \\
\bottomrule
\end{tabular}
\end{table}

We conducted a follow-up study with N=5 participants performing the target acquisition task in the cursor control environment using this force-based interface. Comparing performance between the two interfaces, both using BRACE assistance, revealed statistically indistinguishable results across all metrics, as shown in Table~\ref{tab:interface_comparison}. Statistical analysis confirmed no significant differences between interfaces for success rate ($F(1,8)=0.05, p=0.83$), completion time ($F(1,8)=0.09, p=0.76$), or path efficiency ($F(1,8)=0.14, p=0.71$). This finding validates that BRACE's assistance framework is not related to a specific device's characteristics, but effectively generalizes as long as the input signal is normalized, demonstrating input-modality agnosticism.

\end{document}